\documentclass[lettersize,journal]{IEEEtran}
\usepackage{amsmath,amsfonts}
\usepackage{array}
\usepackage[caption=false,font=normalsize,labelfont=sf,textfont=sf]{subfig}
\usepackage{textcomp}
\usepackage{stfloats}
\usepackage{url}
\usepackage{verbatim}
\usepackage{graphicx}
\usepackage{cite}
\usepackage{booktabs}
\newtheorem{assumption}{Assumption}
\newtheorem{proposition}{Proposition}

\newtheorem{proof}{Proof}
\usepackage{amsmath,amsfonts}
\usepackage[ruled,vlined]{algorithm2e}
\usepackage{amssymb}

\newcommand{\E}{\mathbb{E}}

\newcommand{\B}{\mathcal{B}}

\begin{document}

\title{Improving Online Reinforcement Learning via Bidirectional Behavior Prior Distillation}

\author{Gong Gao$^{1}$, Xiao Lai$^{2}$, Jiaji Shen$^{1}$, Ning Jia$^{1}$, Xianhui Liu$^{1}$, and Weidong Zhao$^{1}$,
\thanks{$^{1}$ School of Computer Science, Tongji University, China,$^{2}$ Shanghai University of Engineering Science, China 
{\tt\small g18438613630@126.com},
{\tt\small 2210624@tongji.edu.cn},
{\tt\small  laixiao@sues.edu.cn}
{\tt\small  7072@tongji.edu.cn}
{\tt\small xianhui\_l@163.com},
{\tt\small zwd\_tj@tongji.edu.cn},

}%
}

\maketitle

\begin{abstract}
Online reinforcement learning (RL) algorithms frequently exhibit poor sample efficiency and unstable learning dynamics, stemming from systematic critic estimation errors that are exacerbated by greedy policy updates.
Existing behavior-prior reinforcement learning methods attempt to alleviate this issue by relying on offline pre-training to learn behavior models from fixed datasets and using policy priors to constrain online policy updates. However, the limited quality of offline datasets often hinders the ability to provide high-value policies that can effectively guide policy updates. The absence of expert trajectories significantly impairs online policy learning, leading to low sample efficiency and suboptimal performance. To address these challenges, we depart from conventional behavior prior approaches and propose a Bidirectional Behavior Prior Distillation (B2PD) algorithm.
B2PD leverages action-value priors to guide a conditional variational autoencoder (CVAE) in generating a high-value behavior support set. The resulting expert behavior priors are further distilled into the agent, effectively reducing inefficient exploration and enabling stable policy optimization, while establishing a bidirectional knowledge flow mechanism.
Empirical evaluations on both state- and pixel-based tasks verify that B2PD substantially improves sample efficiency while maintaining stable policy optimization.
More broadly, this work shows that enforcing high-quality behavioral support during online learning effectively mitigates critic-induced error amplification, enabling structured behavior priors to guide policy updates in a principled and sample-efficient manner.
\end{abstract}

\begin{IEEEkeywords}
Online reinforcement learning, behavior prior distillation, bidirectional knowledge flow, sample efficiency
\end{IEEEkeywords}
\section{Introduction}
\label{Introduction}
\IEEEPARstart{O}{nline} reinforcement learning (RL) is notoriously sample-inefficient, particularly when compared with offline paradigms such as imitation learning~\cite{elhadad2026grail,xu2026understanding}. This inefficiency is largely attributed to the estimation errors in value function approximation, which are aggravated by greedy policy updates and lead to overestimation~\cite{quangaugmenting}. This feedback loop leads to non-stationary value overestimation~\cite{quangaugmenting}, resulting in unstable learning dynamics and inefficient exploration during policy optimization. To address these challenges, behavior prior reinforcement learning (BPRL)~\cite{singh2020parrot,tirumala2022behavior,daoudi2024enhancing,han2024lifelike} has emerged as a promising research direction. As a generalization of standard online RL, BPRL first pretrains a behavior cloning model on offline expert datasets~\cite{guo2024blend,sun2026prior}, which serves as a teacher policy for policy distillation during online learning. By leveraging behavior priors to guide policy updates, BPRL promotes more stable and sample-efficient policy improvement.

In existing BPRL algorithms, offline pretraining plays a crucial role in accelerating policy convergence. Specifically, it utilizes a behavior cloning model to provide policy priors, which are then distilled into the agent to improve sample efficiency~\cite{ball2023efficient,wagenmaker2023leveraging} and expedite policy convergence.
Although offline pretraining has achieved considerable success within the framework of BPRL, it remains subject to a critical limitation: The generated actions are constrained to samples drawn from static offline datasets, thereby restricting policy quality to that of the pre-collected trajectories~\cite{hao2023leveraging,chemingui2025constraint,jiang2026beyond}. Although some researchers have employed advantage-weighted learning to mitigate the impact of low-quality samples~\cite{chen2022lapo,qing2024a2po}, these approaches still fundamentally rely on the fixed support of offline data.

Moving beyond methods that rely on pre-collected expert datasets, recent dynamic behavior-prior approaches~\cite{shen2021theoretically,daoudi2024enhancing} construct policy priors online by maintaining a policy support set, eliminating the need for offline demonstrations. However, these methods are inherently constrained by the quality of the policy support set, which is composed solely of previously observed actions and therefore lacks the ability to generate unseen high-value behaviors. Consequently, the resulting policy priors are often suboptimal, limiting their effectiveness in guiding policy optimization.
This limitation gives rise to two fundamental challenges. First, the restricted policy support set prevents the generation of high-value action priors that could facilitate stable and efficient policy improvement. Second, suboptimal policy priors provide insufficient guidance for actor optimization, reducing their ability to correct inaccurate policy updates induced by stochastic exploration.

To overcome these limitations, we propose an online policy distillation framework that leverages conditional generative modeling to proactively generate policy priors. Unlike retrieval-based methods, the proposed approach is capable of synthesizing novel, high-quality action priors beyond the observed policy support set. Although variational autoencoders have previously been employed to reconstruct offline behaviors for policy transfer~\cite{yang2025dynamic}, their potential for actively improving exploration and policy optimization in online RL has received comparatively limited attention.

In this work, we seek to answer a fundamental question: Can existing knowledge distillation methods be directly applied to online RL training to incorporate policy priors, without relying on offline pretraining or explicit historical trajectory constraints~\cite{goyal2022retrieval,ran2023policy,li2023accelerating,lyu2025cross}, thereby enabling stable policy improvement?
To explore this, we proactively generate behavioral policy priors and introduce a loss function guided by action values to train a behavior cloning model that improves policy quality. Building on this, we distill expert-level behavior priors into the reinforcement learning agent to enable more efficient policy improvement, as illustrated in Fig.~\ref{overall_architecture}. Empirically, Bidirectional Behavior Prior Distillation (B2PD) delivers substantial gains on standard online RL benchmarks, highlighting its practical effectiveness. Moreover, B2PD is broadly compatible with existing online RL algorithms and can be integrated with methods such as SAC~\cite{Haarnoja2018SAC} and TD3~\cite{fujimoto2018addressing} with only minimal modifications.
\begin{figure}[!ht] 
\centering
\begin{center}
\centerline{\includegraphics[width=0.5\textwidth]{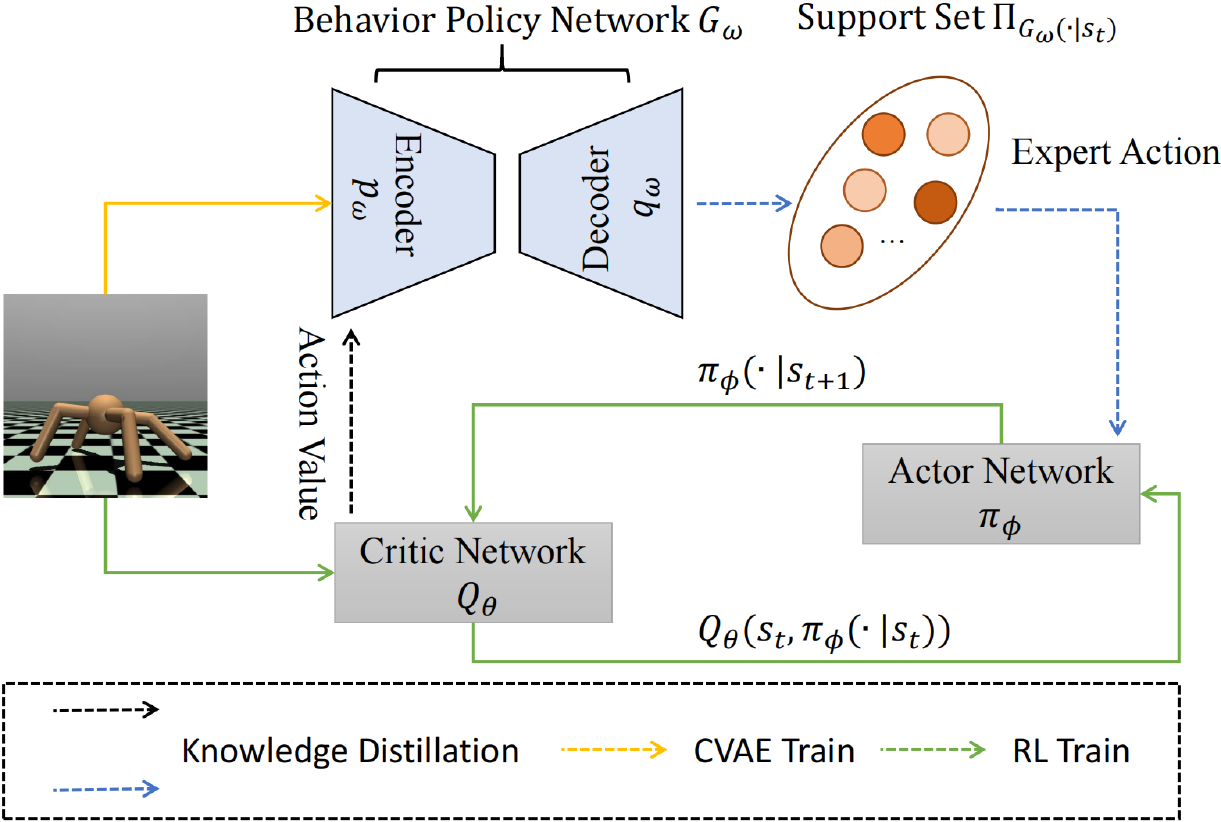}}
\caption{We introduce prior knowledge distillation to enhance learning efficiency and propose bidirectional distillation mechanisms: integrating action value priors into the conditional variational autoencoder (CVAE) network training, and incorporating behavior policy priors during policy updates.}
\label{overall_architecture}
\end{center}
\end{figure}

The contributions of this work are fourfold.
\begin{itemize}
 \item First, we propose B2PD, a novel framework that proactively generates policy priors to facilitate policy optimization in online reinforcement learning. Unlike existing behavior-prior methods that rely on fixed offline datasets or historical trajectories, B2PD dynamically constructs informative policy priors during online interaction.

\item Second, we introduce Action Value Prior Distillation (AVPD), which employs a CVAE to model the policy support set and synthesize high-quality policy priors under Q-value guidance. Building upon AVPD, we further develop Behavior Policy Prior Distillation (BPPD), which distills the generated behavioral priors into the policy through supervised learning, effectively improving exploration efficiency and policy optimization.

\item Third, we propose a Standard Deviation-Aware Noise (SDA Noise) scheduling mechanism that adaptively controls action perturbations to stabilize Q-value estimation.

\item Finally, extensive experiments on both state-based and pixel-based continuous control benchmarks demonstrate that B2PD consistently improves sample efficiency, exploration efficiency, and training stability.
 
\end{itemize}
We begin by introducing the necessary background in Section~\ref{sec:pre} and reviewing related work in Section~\ref{sec:related_works}. Section~\ref{sec:method} presents the proposed B2PD framework, including its algorithmic design and convergence analysis. Experimental evaluations on three continuous control benchmarks are reported in Section~\ref{sec:experiment}. Section~\ref{sec:limitations} discusses limitations and implications of the proposed approach. We conclude in Section~\ref{sec:conclusion_future_works} with a summary and directions for future research.
\section{Preliminaries}
\label{sec:pre}
We begin by establishing the formal notation and background for RL. Subsequently, we detail the maximum entropy RL framework, upon which our method is built as the foundational baseline.
\subsection{Reinforcement Learning}
Within the standard framework of the Markov decision process (MDP), RL can be formulated as $\mathcal{M}=\langle\mathcal{S}, \mathcal{A}, \mathcal{P}, \mathcal{R}, \gamma\rangle$. Here, ${\cal S}$ denotes the state space, ${\cal A}$ denotes the action space, $r: {\cal S\times A}\in [-\mathcal{R}_{max}, \mathcal{R}_{max}]$ denotes the reward function, and $\gamma\in (0,1)$ is the discount factor, and $\mathcal{P}(\cdot \mid s_t,a_t)$ stands for transition dynamics. For simplicity, we denote the current and next state-action pairs as $(s_t, a_t)$ and $(s_{t+1}, a_{t+1})$, respectively. The agent's behavior is defined by a stochastic policy $\pi(\cdot\mid s_t)$, which maps a given state to a probability distribution over possible actions. The state and state-action distributions induced by $\pi$ are denoted by $\rho_{\pi}(s_t)$ and $\rho_{\pi}(s_t, a_t)$, respectively.

\subsection{Maximum Entropy RL}
\label{sec:max_entropy}
In our work, we adopt an entropy-augmented objective~\cite{Haarnoja2018SAC}, which incorporates policy entropy into the reward signal to define the policy optimization objective as
\begin{equation}
\label{eq.policy_objective}
J_{\text{MaxEnt}}(\phi) = \sum_{t=0}^{T}\E_{\substack{(s_t, a_t)\sim  \rho_\pi}} \Big [r(s_t,a_t)+\alpha\mathcal{H}\left(\pi(\cdot \mid s_t)\right)\Big],
\end{equation}
where $\alpha$ is the temperature coefficient, and the policy entropy $\mathcal{H}$ is expressed as
\begin{equation}
\begin{aligned}
\mathcal{H}(\pi(\cdot \mid s_t))=\E_{a_t\sim\pi_\phi}\big[-\log\pi(a_t|s_t)\big].
\end{aligned}
\end{equation}

The optimal policy can be obtained via a maximum entropy variant of policy iteration, which comprises two alternating steps: (a) soft policy evaluation and (b) soft policy improvement. This procedure is collectively referred to as soft policy iteration. Given a policy $\pi$, the corresponding soft Q-value can be learned by iteratively applying a modified Bellman operator $\mathcal{T}_c^\pi$ that incorporates the entropy term $\mathcal{H}$, which can be formulated as
\begin{equation}
\label{eq.soft_bellman_1}
\begin{aligned}
\mathcal{T}_c^\pi Q(s_t,a_t)=&r(s_t,a_t)+\gamma \mathbb{E}_{s_{t+1}\sim \rho_\pi,a_{t+1}\sim \pi_{\phi}} [Q_{\theta ^{'} }(s_{t+1},a_{t+1}) \\
-&\alpha \log\pi_{\phi} (\cdot \mid s_{t+1})]\big],
\end{aligned}
\end{equation}
where $\gamma \in [0,1)$ is the discount factor, $\theta$ denotes the parameters of the Q-network, and $\phi$ denotes the parameters of the Gaussian policy.
The parameters of the soft Q-function are optimized by minimizing the soft Bellman residual, which can be formalized as
\begin{align}
\label{eq:critic_origin}
\mathcal{L}_Q(\theta) = \E_{(s_t, a_t)\sim  \B}{\left(Q(s_t, a_t) - \mathcal{T}_c^\pi Q(s_t,a_t)\right)^2},
\end{align}
where $\B$ denotes a mini-batch sampled from the replay buffer.
\section{Related Works}

\label{sec:related_works}
\textit{Online RL with Static Policy Prior:}
In contrast to training agents directly from scratch, RL with static policy prior~\cite{ball2023efficient,zang2023behavior,spigler2024proximal,choi2025dynamic} leverages explicit knowledge obtained from offline datasets to guide online agent training, thereby accelerating policy convergence. In practical applications, these additional priors are typically used to direct the agent's focus on high-value areas of the Markov Decision Process (MDP), reducing unnecessary exploration.

Initially, these data are employed to initialize policies through behavior cloning (BC)~\cite{george2023one}, and they permeate throughout the optimization process. For instance, in RLPD~\cite{ball2023efficient} and EXPLORE~\cite{li2023accelerating}, demonstration data are incorporated into the replay buffer, and new regularization terms are introduced to force the optimization process to better utilize the demonstration data. Similarly, POfD~\cite{kang2018policy}, LOGO~\cite{rengarajan2022reinforcement}, BPR~\cite{zang2023behavior}, PPD~\cite{spigler2024proximal}, DCSL~\cite{choi2025dynamic} and DICE-RL~\cite{sunprior} impose penalties or constraints on the RL objective, forcing the agent's policy to remain close to the policy prior.
However, these methods are only feasible when expert policies are available, as they cannot effectively guide policy updates in states with low-quality interactions~\cite{qing2024a2po,beliaev2025inverse}. Furthermore, these approaches might require substantial data to direct the RL agent, and acquiring such data can be challenging in systems with high sampling costs.

\textit{Online RL with Dynamic Policy Prior:}
Dynamic policy prior methods offer a flexible framework that bypasses the need for static offline datasets. These approaches typically facilitate exploration by leveraging local policy guidance, either by integrating historical trajectories to enrich representation~\cite{kapturowski2018recurrent,quangaugmenting} or by constraining the action space to stabilize learning~\cite{shen2021theoretically,daoudi2024enhancing}.

One prominent strategy involves exploiting historical interaction data. R2D2~\cite{kapturowski2018recurrent} employs recurrent state representations to capture temporal dependencies, while ALH~\cite{quangaugmenting} utilizes past observations to provide action guidance. However, such methods are prone to error propagation during early training stages due to high epistemic uncertainty. To improve sample utilization, recent studies~\cite{ji2023seizing,luo2024offline} have incorporated historically high-value actions into the value estimation process. Specifically, BAC~\cite{ji2023seizing} mitigates Q-value underestimation by shifting from action-value estimation to state-value estimation, while OBAC~\cite{luo2024offline} employs a concurrently learned offline policy as an adaptive regularizer. Nevertheless, their reliance on historical behaviors gradually increases throughout training, which may bias policy optimization toward suboptimal behaviors and reduce exploration diversity, especially in environments with complex and multimodal reward landscapes.

Alternatively, nearest-neighbor RL algorithms implement constrained exploration to alleviate bootstrapping errors~\cite{shah2018q}. Theoretical work by NNAC~\cite{shen2021theoretically} demonstrates that high-quality actions retrieved from local neighborhoods serve as effective policy anchors. Similarly, IRA~\cite{gao2026improving} utilizes retrieved high-quality trajectories to enhance sample efficiency, and RLLG~\cite{daoudi2024enhancing} employs approximate policy evaluation to direct local guide policies toward better actions.

Our approach aligns with existing methods in leveraging local policy guidance~\cite{shen2021theoretically,daoudi2024enhancing,gao2026improving}, yet fundamentally departs from retrieval-based frameworks due to its generative nature. By integrating action-value prior distillation, our framework provides a continuous and expressive guidance signal that regularizes actor updates. This mechanism not only mitigates the inefficient exploration inherent in maximum-entropy RL, but also robustly corrects suboptimal gradients, thereby promoting stable policy optimization without relying on offline datasets.

\section{Methods}
\label{sec:method}
In this section, we first investigate why the policy support set naturally contains high-quality behavioral priors and how these priors can be exploited to improve policy optimization in online RL. Based on this insight, we develop the proposed B2PD framework on top of SAC, as summarized in Algorithm~\ref{alg:B2PD_online}. First, AVPD employs a CVAE to learn the policy support set from the online replay buffer and synthesize high-quality policy priors under Q-value guidance. Second, BPPD transfers the generated behavioral priors to the policy network through supervised learning, where high-value actions are selected as reliable update anchors for policy optimization. Finally, an SDA Noise scheduling mechanism adaptively controls action perturbations to stabilize Q-value estimation, providing reliable supervision for AVPD and enabling the generation of more informative policy priors.

\subsection{Existence of Expert Behavior Prior}
We first introduce two standard assumptions used throughout the theoretical analysis: the approximation assumption and the support coverage assumption.

\begin{assumption}[Universal Multimodal Policy Approximation~\cite{huang2023reparameterized}]
\label{assump:cvae_ua_1}
Let CVAE ${G}_\omega$ denote the class of behavior policies parameterized by $\omega$, where the latent variable $z \sim p(\cdot \mid s_t)$ captures context-dependent stochasticity and the decoder $q(\cdot \mid s_t, z)$ maps to actions. Then, under sufficient model capacity and data coverage, for any continuous and potentially multimodal optimal policy, there exists $G_\omega(\cdot \mid s_t)$ that approximates the actor $\pi_{\phi}(\cdot \mid s_t)$.
\end{assumption}

\begin{assumption}[Policy Support Set Inclusion~\cite{achiam2017constrained}]
\label{prop:support_cvae_1}
Let $\Pi_{G_{\omega}}$ denote the set of actions with non-zero density under a CVAE-based policy support set $\Pi_{{G_{\omega}}(\cdot \mid s_t)}$, and ${\pi_{\phi}(\cdot \mid s_t)}$ denote a Gaussian policy. Then, for any $s_t \in \mathcal{S}$,
${\pi_{\phi}(\cdot \mid s_t)} \subsetneqq  \Pi_{{G_{\omega}}(\cdot \mid s_t)}$,
where the inclusion is strict if $\Pi_{G_{\omega}}$ approximates a multimodal policy distribution.
\end{assumption}

 \begin{proposition} Suppose $s_t$, $a_t$, and $r_t$ follow MDP model. Under the assumption that the generative model is universal policy approximation and multimodal, there exists at least one action $\tilde{a} \in  \Pi_{G_{\omega}}(\cdot |s_t)$ such that $Q_{\theta}(s_t, \tilde{a})> Q_{\theta}(s_t, \pi(\cdot \mid s_t))$.
\label{prop:multi_model}
\end{proposition}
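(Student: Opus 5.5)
The idea is to read $Q_{\theta}(s_t, \pi(\cdot\mid s_t))$ as the policy value $V^{\pi}(s_t) := \E_{a\sim\pi_{\phi}(\cdot\mid s_t)}[Q_{\theta}(s_t,a)]$. The claim then reduces to a simple fact: an average cannot exceed the supremum over a set that contains its support, with strictness supplied by multimodality.

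\textbf{Step 1: mean versus supremum.} Since rewards lie in $[-\mathcal{R}_{max},\mathcal{R}_{max}]$ and $\gamma<1$, $Q_{\theta}(s_t,\cdot)$ is bounded, so $V^{\pi}(s_t)$ is finite. Because $V^{\pi}(s_t)$ is an expectation over $\mathrm{supp}\,\pi_{\phi}(\cdot\mid s_t)$, we get
\begin{equation*}
V^{\pi}(s_t)\le \sup_{a\in \mathrm{supp}\,\pi_{\phi}(\cdot\mid s_t)} Q_{\theta}(s_t,a).
\end{equation*}
Equality forces $Q_{\theta}(s_t,\cdot)$ to equal $V^{\pi}(s_t)$ $\pi_\phi$-almost surely.

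\textbf{Step 2: reduction to an element of $\Pi_{G_\omega}$.} The statement only requires some $\tilde a$ in $\Pi_{G_{\omega}}(\cdot\mid s_t)$, not necessarily in the support of $\pi_\phi$. By Assumption~\ref{prop:support_cvae_1} the Gaussian support is contained in $\Pi_{G_{\omega}}$. So any action in the Gaussian support with $Q_\theta(s_t,a)>V^{\pi}(s_t)$ already suffices. Such an action exists unless $Q_{\theta}(s_t,\cdot)$ is $\pi_\phi$-a.s.\ constant on that support.

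\textbf{Step 3: the degenerate case.} This is the main obstacle, since the proposition is stated with strict inequality. I would handle it with the strictness clause of Assumption~\ref{prop:support_cvae_1}. Under multimodality there is a set $\Pi_{G_{\omega}}\setminus \mathrm{supp}\,\pi_{\phi}$ of actions carrying positive $G_\omega$-mass, which by Assumption~\ref{assump:cvae_ua_1} captures modes of the (multimodal) optimal policy that the unimodal Gaussian misses. Universal approximation of a multimodal optimal policy means at least one such mode is placed at actions whose $Q_\theta$-value is not dominated by the Gaussian's average. If every extra mode had $Q_\theta\le V^{\pi}(s_t)$ while $Q_\theta$ were constant on the Gaussian support, then $\pi_\phi$ would itself attain the maximal soft value. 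In that case the optimal policy would not need those additional modes, contradicting the assumed strict multimodality.

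I expect this last step to need care, because it effectively becomes an extra nondegeneracy hypothesis: $Q_{\theta}(s_t,\cdot)$ is not constant on $\mathrm{supp}\,\pi_{\phi}$, or the optimal policy's extra mode has strictly higher value. If the contradiction argument does not close cleanly, my fallback is to state that hypothesis explicitly and conclude from Steps 1--2 alone. This is also the generic case, for example whenever $Q_\theta(s_t,\cdot)$ is continuous and nonconstant on the Gaussian support, which is all of $\mathcal{A}$.
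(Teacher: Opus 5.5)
Your route differs from the paper's. The paper argues by contradiction along the training sequence. If no action of $\Pi_{G_\omega}(\cdot\mid s_t)$ beat $\pi^k$, then $Q_\theta(s_t,\pi^k)=Q_\theta(s_t,\pi^\star)$, and this contradicts the strict improvement $Q_\theta(s_t,\pi^k)<Q_\theta(s_t,\pi^{k+1})$ that the paper takes for granted during training. You instead work at a single policy: the average $\E_{a\sim\pi_\phi}[Q_\theta(s_t,a)]$ is at most the supremum over $\mathrm{supp}\,\pi_\phi$, and Assumption~\ref{prop:support_cvae_1} carries an above-average action into $\Pi_{G_\omega}$. Steps~1--2 are sound and need less than the paper does. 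You never use Assumption~\ref{assump:cvae_ua_1}, and you avoid the paper's unjustified jump from ``no generated action beats $\pi^k$'' to ``$\pi^k$ is optimal,'' which silently requires $\Pi_{G_\omega}$ to contain the optimal policy's actions. One small fix: boundedness of $Q_\theta(s_t,\cdot)$ should come from continuity of the critic on the compact action box, not from $\mathcal{R}_{max}$, which bounds the true $Q$ rather than the network.

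Drop Step~3 rather than trying to repair it. Multimodality is a property of the generative model (or of the optimal policy it approximates), while the claim concerns the current critic $Q_\theta$, which is an arbitrary network during training; nothing in the assumptions links the two. Moreover, the tanh-squashed Gaussian already has full support on $\mathcal{A}$, so $\Pi_{G_\omega}\setminus\mathrm{supp}\,\pi_\phi$ contains no admissible action to exploit.

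In fact, if $Q_\theta(s_t,\cdot)$ is constant, the proposition is false under the stated assumptions. A nondegeneracy hypothesis is therefore unavoidable, and your fallback is the correct formulation. It is also exactly what the paper uses implicitly. Its premise $Q_\theta(s_t,\pi^k)<Q_\theta(s_t,\pi^{k+1})$ forces $Q_\theta(s_t,\cdot)$ to be non-constant on $\mathrm{supp}\,\pi^k$: by continuity and full support, a $\pi^k$-a.s.\ constant critic is constant on $\mathcal{A}$, and then no policy can improve.

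In your language the paper's idea becomes rigorous. Apply Step~1 to $\pi^{k+1}$ to get $a\in\mathrm{supp}\,\pi^{k+1}$ with $Q_\theta(s_t,a)>\E_{a'\sim\pi^k}[Q_\theta(s_t,a')]$, then invoke Assumption~\ref{prop:support_cvae_1} for the Gaussian $\pi^{k+1}$. Your version buys an explicit, checkable condition on $Q_\theta$ at one policy; the paper's version ties nondegeneracy to training not yet having converged.
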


\begin{proof}
We first prove that there exists at least one action prior $\tilde{a} \in \Pi_{G(\cdot |s_t)}$ such that 
$
Q_{\theta}(s_t, \tilde{a}) > Q_{\theta}(s_t, \pi^k(\cdot \mid s_t)),
$
where $\pi^{k}$ denotes the policy at RL training iteration $k$.
We prove this by contradiction. Suppose for all $s_t,\tilde a$ that 
$
Q_{\theta}(s_t, \tilde{a}) < Q_{\theta}(s_t, \pi^k(\cdot \mid s_t)).
$
Then, we have 
$
Q_{\theta}(s_t, \pi^k(\cdot \mid s_t)) = Q_{\theta}(s_t, \pi^{\star}(\cdot \mid s_t)).
$
However, this contradicts the statement 
$
Q_{\theta}(s_t, \pi^k(\cdot \mid s_t)) < Q_{\theta}(s_t, \pi^{k+1}(\cdot \mid s_t)).
$
Thus, this contradicts our initial claim regarding the existence of the expert action prior. We contend that within a finite number of training steps, there exists at least one action within the behavior policy support set of the generative network that achieves a Q-value greater than that of the action selected by the current policy.

\end{proof}

\subsection{Action Value Prior Distillation}
In this section, we provide a comprehensive overview of the construction of the behavior policy model. Considering the mode collapse issue commonly observed in GANs~\cite{liu2019spectral}, we avoid using GAN-based architectures to learn the policy prior. Consequently, we adopt a CVAE to build the behavior policy model $G$. During action reconstruction, the model incorporates action value estimates as prior knowledge to guide $G$ toward producing higher-value actions.

The CVAE $G_\omega$, parameterized by $\omega$, consists of an encoder $p_\omega(s_t, a_t)$ and a decoder $q_\omega(s_t,z)$, denoted as $G = \{p, q\}$. The CVAE is optimized by maximizing its evidence lower bound objective (ELBO), which can be formulated as
\begin{equation}
    \label{eq:Rec}
    \begin{aligned}
    \mathcal{V}_{\text{ELBO}}(\omega) = &\mathbb{E}_{\substack{(s_t, a_t)\sim \B},z\sim p_\omega(s_t, a_t)}\big[ \left(a_t - q_\omega(s_t,z)\right)^2    \\ +&{\text{KL}}\left(p_\omega(s_t, a_t)|| \mathcal{N}(0,{\bf I})\right) \big],
    \end{aligned}
\end{equation}
where ${\text{KL}}(p||q)$ denotes the kullback–leibler  (KL)~\cite{hinton2015distilling} divergence between the probability distributions $p(\cdot)$ and $q(\cdot)$, and $\mathbf{I}$ represents the identity matrix. When sampling actions from the CVAE, we first sample a latent variable $z$ from the prior distribution, which is assumed to follow a normal distribution $\mathcal{N}(0, \mathbf{I})$. This latent variable, along with the state $s_t$, is then passed into the decoder $q_\omega(s_t,z)$ to obtain the decoded action.

Furthermore, we expect the value estimation prior in the Q-network to serve as an effective guidance signal, enabling the CVAE to generate actions with higher expected returns.
As noted by DIDI~\cite{liu2024didi}, employing Q-value guided gradients to train a generative model yields diverse and high-value actions. We propose an action-value prior distillation loss, which leverages double Q-networks to guide the gradient update direction of the CVAE. The Q-value prior distillation loss can be formulated as
\begin{equation}
    \label{eq:cvae_q}
     \mathcal{V}_{\text{Dist}}(\omega) = -  \mathbb{E}_{(s_t, a_t)\sim\mathcal{\B},z\sim p_\omega(s_t, a_t)}    {Q_\theta{(s_t,}q_\omega(s_t,z))}  .
\end{equation}

We optimize the complete CVAE model by integrating the two aforementioned loss components. To this end, we introduce a weighting parameter $\xi$ to balance the trade-off between action reconstruction and the value of the generated actions. The overall loss function is then formulated as
\begin{equation}
    \label{eq:CVAE}  
    \mathcal{V}_{G}(\omega) = \mathcal{V}_{\text{ELBO}}(\omega) +\xi \mathcal{V}_{\text{Dist}}(\omega).
\end{equation}

\subsection{Behavior Policy Prior Distillation}
\label{sec:hpg}
We leverage the behavior policy model $G$ to generate the policy prior $G_\omega(s)$, which provides a high-value anchor for policy updates during the actor optimization.
The behavior prior is obtained via multiple forward passes of the generative network derived from the policy support set, where the anchor ${\tilde {a}}$ for policy updates is chosen based on Q-target evaluations, and is formulated as
\begin{equation}
\label{eq:optimal_a}
{\tilde a} = {\arg{\mathop{\max \limits_{a_h}}}} \left(Q_{\theta^{'}}(s_t,a_h)\right), a_h \in \Pi_{G_{\omega}(\cdot \mid s_t)},
\end{equation}
where $\arg\mathop{\max}\limits_{a_h}$ denotes the process that identifies optimal actions from the policy prior set $\Pi_{G_{\omega}(\cdot \mid s_t)}$, and $h = {1, 2, 3, \dots, H}$ denotes the indices of $H$ prior actions generated by the CVAE.
Following the identification of the optimal action, we implement a systematic optimization mechanism that progressively refines the policy network with the optimal actions through a knowledge distillation paradigm consistent with the algorithm established in~\cite{guo2024blend}.
We incorporate expert prior knowledge into the actor network by minimizing the KL divergence as the training objective, which can be formalized as
\begin{equation}
\label{eq:heu}
\begin{aligned}
J_{\text{Dist}}(\phi)=-{\mathbb{E}_{s_t  \sim \B }}\left[ \text{KL}{{(\pi_\phi(\cdot \mid  s_t) || {{\tilde a}})}} \right].
\end{aligned}
\end{equation}

Notably, actions sampled from the policy support set are not guaranteed to be strictly superior to the current policy action. To address this issue, we introduce a value-aware and dynamically weighted distillation scheme, which selectively applies prior distillation only to high-quality sampled actions and adaptively controls the distillation strength.
Specifically, the distillation weight $\zeta$ is defined as
\begin{equation}
\label{overall_eta_weight}
\zeta
= \eta \frac{
\max\bigl(Q_{\theta}(s_t,\tilde a) - Q_{\theta}(s_t,\pi_{\phi}(\cdot \mid s_t)), 0\bigr)
}{
\mathbb{E}_{s_t \in \mathcal{B}}
\left[
\max\bigl(Q_{\theta}(s_t,\tilde a) - Q_{\theta}(s_t,\pi_{\phi}(\cdot \mid s_t)), 0\bigr)
\right]+c
},
\end{equation}
where $\eta$ is a hyperparameter regulating the overall distillation intensity, $\max(\cdot, \cdot)$ represents the rectified value advantage, and $c$ is a small constant added for numerical stability.

Then, we integrate the maximum entropy objective with the expert policy prior distillation loss.
The overall loss function of the Actor network can be formulated as
\begin{equation}
\label{overall_actor}
J_{\pi}(\phi)=   J_{\text{MaxEnt}}(\phi)+ \zeta   J_{\text{Dist}}(\phi) ,
\end{equation}
where $\zeta$ denotes the weight of the policy distillation loss $J_{\text{Dist}}$.

\subsection{Standard Deviation-Aware Noise Scheduling}
 To achieve stable Q-value estimation, we propose the SDA Noise scheduling mechanism, which dynamically adjusts the noise magnitude based on the predicted action distribution. This further refines the Q-target value estimation process in the vanilla SAC~\cite{Haarnoja2018SAC} algorithm.   

\begin{proposition}[Noise-Regularized Q-Value Smoothness]
\label{assump:noisy_Q_smooth}
The soft Q-value ${Q}(s_t, a_t)$
is assumed to be Lipschitz continuous~\cite{geist2019theory}. Moreover, the Q-function $Q(s_t, a_t)$ is twice differentiable concerning $a_t$, and there exists two constants $C_1 > 0$, $C_2 > 0$ such that $\left\| \nabla_a Q(s_t, \hat a) \right\| \leq C_1, \left\| \nabla_a^2 Q(s_t, \hat a) \right\| \leq C_2$. where $\hat{a} \sim \mathcal{U}(a_t, a_t + \epsilon)$ is an action sampled uniformly between $a$ and the noisy action $a + \epsilon$. That is, for all states \( s_t \in \mathcal{S} \) and Gaussian action noise \( \epsilon \sim \mathcal{N}(0, \tau^2\mathbf{I}) \), the following inequality holds
\begin{equation}
\label{eq:smooth}
\E_{\epsilon}\left|  {Q}(s_t, a_t) -   {Q}(s_t, a_t+\epsilon) \right| \leq  C_1\sqrt{\frac{2}{\pi}}\tau+ \frac{1}{2}C_2\tau^2,
\end{equation}
where $\tau$ denotes the standard deviation of the Gaussian noise.
This smoothing effect facilitates more stable Q-value estimation and mitigates high variance in bootstrapped Q-learning.
The proof is presented in Appendix~\ref{thero_ana_b2d}.
\end{proposition}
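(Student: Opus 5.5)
We prove the bound by expanding $Q(s_t,\cdot)$ to second order around $a_t$ and then averaging over the Gaussian noise. Fix $s_t$ and $a_t$, and let $\epsilon\sim\mathcal{N}(0,\tau^2\mathbf{I})$. Since $Q(s_t,\cdot)$ is twice differentiable in the action, Taylor's theorem with Lagrange remainder along the segment from $a_t$ to $a_t+\epsilon$ gives
\begin{equation*}
Q(s_t,a_t+\epsilon)-Q(s_t,a_t)=\nabla_a Q(s_t,a_t)^{\top}\epsilon+\tfrac12\,\epsilon^{\top}\nabla_a^2 Q(s_t,\hat a)\,\epsilon ,
\end{equation*}
where $\hat a$ is an intermediate point on that segment. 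This is exactly the role of $\hat a$ in Proposition~\ref{assump:noisy_Q_smooth}. The statement also bounds $\|\nabla_a Q\|$ at intermediate points, and we read it (by continuity, letting the intermediate point approach $a_t$) as also giving $\|\nabla_a Q(s_t,a_t)\|\le C_1$.

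We then apply the triangle inequality and take expectations, which splits the target quantity into a first-order term and a second-order term:
\begin{equation*}
\E_\epsilon\left|Q(s_t,a_t)-Q(s_t,a_t+\epsilon)\right|\le \E_\epsilon\left|\nabla_a Q(s_t,a_t)^{\top}\epsilon\right|+\tfrac12\,\E_\epsilon\left|\epsilon^{\top}\nabla_a^2 Q(s_t,\hat a)\,\epsilon\right| .
\end{equation*}

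\textbf{First-order term.} Set $g=\nabla_a Q(s_t,a_t)$, which is deterministic. Then $g^{\top}\epsilon\sim\mathcal{N}(0,\tau^2\|g\|^2)$. The half-normal mean gives
\begin{equation*}
\E\left|g^{\top}\epsilon\right|=\|g\|\,\tau\sqrt{2/\pi}\le C_1\sqrt{2/\pi}\,\tau ,
\end{equation*}
which is the first term of \eqref{eq:smooth}. This step is clean.

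\textbf{Second-order term.} Using the operator-norm bound gives
\begin{equation*}
\left|\epsilon^{\top}\nabla_a^2 Q\,\epsilon\right|\le C_2\|\epsilon\|^2 ,
\end{equation*}
and $\E\|\epsilon\|^2=d\,\tau^2$, where $d=\dim\mathcal{A}$. This produces $\tfrac12 C_2 d\,\tau^2$ rather than the stated $\tfrac12 C_2\tau^2$.

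\textbf{Main obstacle.} The difficulty is the dimension factor in the second-order term. I see three ways to handle it, in the order I would try them:
\begin{itemize}
\item Read the statement as the one-dimensional (per-coordinate) case, where $\E\epsilon^2=\tau^2$ and the constant matches exactly.
\item Interpret $\tau^2$ as the total noise variance $\E\|\epsilon\|^2$.
\item Absorb $d$ into $C_2$, i.e.\ take $C_2$ to bound $d\,\|\nabla_a^2 Q\|$ or the Hessian trace norm.
\end{itemize}
The stated bound does not follow for $d>1$ as literally written, so the write-up must state explicitly which of these interpretations it adopts. Beyond that, the only remaining care is that $\hat a$ depends on $\epsilon$. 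This is harmless because the bound $C_2$ is uniform over the segment. Lipschitz continuity is not needed beyond ensuring that all expectations are finite.
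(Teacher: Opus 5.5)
Your argument is the paper's own: a second-order Taylor expansion with the Lagrange point $\hat a$, the triangle inequality, the half-normal moment $\sqrt{2/\pi}\,\tau$ for the linear term, and the Hessian bound times the second moment of $\epsilon$ for the remainder.

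Your dimension concern is correct, and the paper does not resolve it. For the remainder, its appendix writes $\E_\epsilon\|\epsilon^2\|=\operatorname{Var}(\epsilon)\le\tau^2$, which is the scalar case. Its H\"older step $C_1\E_\epsilon\|\epsilon\|_1\le C_1\sqrt{2/\pi}\,\tau$ also silently drops a factor $d$, since $\E_\epsilon\|\epsilon\|_1=d\sqrt{2/\pi}\,\tau$. Your observation $g^\top\epsilon\sim\mathcal{N}(0,\tau^2\|g\|^2)$ avoids this and gives a linear-term bound valid in every dimension.

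In fact the literal inequality fails for larger $d$, with the spectral norm on the Hessian as in your write-up. Take $Q(s_t,a)=C_1\sqrt{\|a-a_t\|^2+\tau^2}$. It satisfies $\|\nabla_aQ\|\le C_1$ and $\|\nabla_a^2Q\|\le C_1/\tau=:C_2$ everywhere. Yet for $d=9$ the left-hand side is at least $C_1\tau(\E\|z\|-1)\approx 1.92\,C_1\tau$ with $z\sim\mathcal{N}(0,\mathbf{I}_9)$. The right-hand side equals $C_1\tau(\sqrt{2/\pi}+\tfrac12)\approx 1.30\,C_1\tau$. So an explicit reinterpretation is unavoidable, and your one-dimensional reading is exactly what the paper's proof establishes.

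One caution on your third option: assuming a uniform bound on the trace norm of the Hessian does not repair the argument as you have set it up. Because $\hat a$ depends on $\epsilon$, the estimate you actually use is the pointwise one, $|\epsilon^\top\nabla_a^2Q(s_t,\hat a)\,\epsilon|\le\|\nabla_a^2Q(s_t,\hat a)\|\,\|\epsilon\|^2$. Its expectation still carries $d\tau^2$. The averaging bound $\E|\epsilon^\top H\epsilon|\le\tau^2\|H\|_{*}$ requires $H$ not to depend on $\epsilon$. Use $C_2\ge d\,\sup\|\nabla_a^2Q\|$ or the reading $\tau^2=\E\|\epsilon\|^2$ instead.
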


Under the maximum entropy framework, adding Gaussian noise $\epsilon$ to actions yields smoother and more stable action-value estimates, formulated as
\begin{equation}
\label{target_q_loss}
\begin{aligned}
\mathcal{T}_c^\pi Q_{\theta}(s_t, a_t) =& r + \gamma \, \mathbb{E}_{s_{t+1} \sim \rho_\pi,\, a_{t+1} \sim \pi_{\phi}} \big[ Q_{\theta'}(s_{t+1}, a_{t+1} + \epsilon)\\ - &\alpha \log \pi_{\phi}(a_{t+1} \mid s_{t+1}) \big].
\end{aligned}
\end{equation}

Since action noise in SAC can lead to abrupt changes in policy entropy, we exclude noise-perturbed actions when computing entropy to ensure a more stable and accurate estimate.
To achieve stable Q-value estimation, we propose an SDA Noise scheduling mechanism. Specifically, the action distribution is modeled as a Gaussian policy by the actor network, defined as 
\begin{equation}
    \pi_{\phi}(\cdot \mid s_t) = \mathcal{N}\left(\mu_{\phi}(s_t), \operatorname{diag}(\sigma_{\phi}^2(s_t))\right),
\end{equation}
where $\mu_{\phi}(s_t)$ denotes the mean and $\operatorname{diag}(\sigma_{\phi}^2(s_t))$ represents the diagonal covariance matrix parameterized by the predicted standard deviation vector $\sigma_{\phi}(s_t)$.
We design a noise scheduling function that adaptively scales the action noise, where the noise standard deviation $\tau$ is formulated as
\begin{equation}
    \label{eq:noise_schedule}
    \tau =\frac{1}{d_a}* \sum_{i=1}^{d_a} \frac{0.2}{\exp\left(\sigma^{i}_{\phi}(s_t)\right)^{1.5}},
\end{equation}
where $d_a$ denotes the dimensionality of the action space, and $\sigma^{i}$ is the predicted standard deviation vector of the $i$-th element. This formulation is closely related to the noise injection mechanism in TD3 and adopts the same maximum noise magnitude of 0.2.
This mechanism allows the agent to adaptively adjust the noise standard deviation based on the uncertainty of the predicted action distribution. When the predicted action standard deviation is large, the noise standard deviation $\tau$ correspondingly decreases, resulting in a reduced exploration range that benefits the stability of Q-value estimation. Conversely, a decrease in the action standard deviation expands the exploration range, facilitating broader exploration of the action space.

\begin{algorithm}[!ht]
\setlength{\algomargin}{0.5em}
\SetAlgoInsideSkip{smallskip}
\caption{Bidirectional Behavior Prior Distillation}
\label{alg:B2PD_online}
Initialize behavior policy network $G_{\omega}$, critic networks $Q_{\theta}$, 
and actor network $\pi_\phi$ with random parameters $\omega$, $\theta$, $\phi$\;

Initialize target networks $\theta' \leftarrow \theta$, and initialize empty replay buffer $\mathcal{B}$\;

\For{$t = 0$ \KwTo $T$}{
    Sample action $a \sim \pi_\phi(\cdot \mid s_t)$\;
    
    Get reward $r_t$ and new state $s_{t+1}$, and store transition tuple $(s_t, a_t, r_t, s_{t+1})$ in $\mathcal{B}$\;
    
    Sample mini-batch of transitions $(s_t, a_t, r_t, s_{t+1}) \sim \mathcal{B}$\;
    
Obtain the noise standard deviation $\tau$ via Eq.~\ref{eq:noise_schedule}\;

Obtain the Q-target via Eq.~\ref{target_q_loss}\;
    
    \textbf{Train critic:}   Update critic by minimizing Eq.~\ref{eq:critic_origin}\;
    
    \textbf{Train behavior prior network:}   Update CVAE by minimizing Eq.~\ref{eq:CVAE}\;
    
       Get the expert policy anchor $\tilde a$ by Eq.~\ref{eq:optimal_a}\;
    
    \textbf{Train actor:} Update actor by maximizing Eq.~\ref{overall_actor}\;
    
    \textbf{Update weights:}    $\theta' \leftarrow \lambda \theta + (1 - \lambda)\theta'$ ;
}

\end{algorithm}
\subsection{A Toy Example}
\label{sec:toy_exper}
To provide a more intuitive view of the effectiveness of B2PD compared to maximum entropy reinforcement learning methods, we conduct a comparative analysis of trajectories generated by B2PD and SAC at different training steps. We focus on two evaluation aspects: (1) Can B2PD reduce inefficient random exploration? (2) Does the trajectory distribution cover high-reward regions?

\begin{figure*}
    \centering
\begin{center}
\centerline{\includegraphics[width=1.0\textwidth]{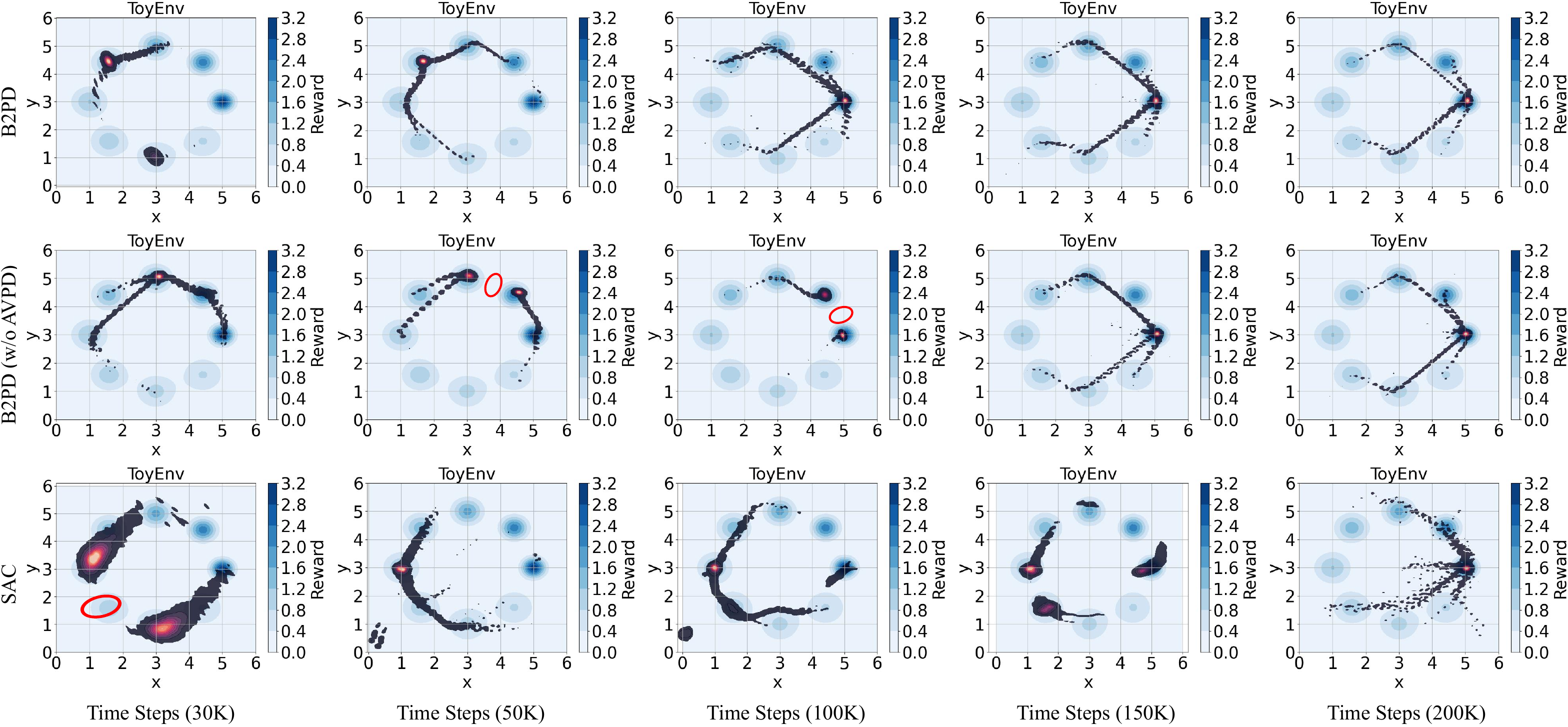}}
\caption{
Visualization of reward and state visitation in the ToyEnv. 
Blue shading indicates the reward magnitude, with darker blue representing higher rewards. 
Red shading corresponds to the state visitation frequency, where more intense red reflects states visited more frequently.
Regions shown in white or pure blue indicate areas that are rarely or never explored.
Each row corresponds to one of the evaluated algorithms (B2PD, B2PD (w/o AVPD), and SAC), and each column corresponds to different training timesteps (30K, 50K, 100K, 150K, and 200K). 
}
\label{fig:Over_toyenv}
\end{center}
\end{figure*}
We construct a toy environment with a state space formed by a mixture of eight Gaussian reward centers, where $S=[S_x, S_y]= [0,6]^2$ denote the state dimensions, the action space is defined as $A=[A_x, A_y] = [-1,1]^2$, and the state transition follows $s_{t+1} = s_t + a_t \times 0.1$.
To examine whether the agent exhibits inefficient exploration during initial training, we estimate the kernel density of state distributions from validation trajectories and visualize the visitation frequency across the state space. 
We visualize the state trajectory distributions at training timesteps 30K, 50K, 100K, 150K, and 200K, using 100 sampled trajectories per checkpoint, as illustrated in Fig.~\ref{fig:Over_toyenv}. 

From these visualizations, we draw three key observations:

First, the results show that B2PD tends to explore nearby high-reward regions progressively. This structured exploration strategy effectively reduces inefficient exploration. In contrast, SAC often falls into local optima even at 150K timesteps, indicating that blind exploration driven by stochastic policy and entropy regularization leads to suboptimal learning efficiency.

Second, compared with SAC, B2PD significantly reduces the visitation frequency of low-reward states, exhibiting superior sample efficiency.
We attribute this to B2PD explicitly distilling the expert prior policy into the agent, such that each state transition is guided by expert action anchors, thereby reducing ineffective exploration.

Third, while both B2PD and its ablated variant B2PD (w/o AVPD) exhibit support-set-based exploration behavior, their exploration patterns differ significantly. As illustrated in Fig.~\ref{fig:Over_toyenv}, B2PD (w/o AVPD) still contains a considerable amount of inefficient random exploration. In particular, at 50K and 100K timesteps, trajectories of B2PD (w/o AVPD) rarely traverse the intermediate region between the two red-shaded areas, indicating poor coverage of valuable transitional states. In contrast, B2PD demonstrates a progressive exploration process throughout training, gradually expanding from low-value regions toward high-value areas. This behavior reflects a structured and value-driven exploration mechanism induced by high-quality action support sets, ultimately facilitating more effective policy improvement.

We argue that the entropy-driven exploration mechanism in SAC expands the action distribution in a largely uniform manner, encouraging broad exploration but ignoring the underlying value structure. Consequently, SAC may spend substantial effort exploring inefficient or redundant regions of the state–action space, particularly when high-reward areas constitute only a small subset, as shown in Fig.~\ref{fig:Over_toyenv}.
In contrast, B2PD restricts exploration toward value-aligned, high-quality modes encoded in the distilled prior. This induces a more structured exploration pattern, where the policy deliberately avoids low-value regions that contradict the prior knowledge, ultimately leading to more efficient exploration and faster convergence.

\section{Experiments}
\label{sec:experiment}
In this section, we evaluate the performance of B2PD using two types of observations.
For state-based inputs, we consider seven MuJoCo continuous-control tasks, four PyBullet tasks, and four DMControl suite tasks.
For visual inputs, we evaluate the method on four pixel-based DMControl tasks.
These experiments aim to answer the following research questions:
(1) Can B2PD leverage behavior policy to accelerate online learning?
(2) Why does exploration guided by a behavior prior yield better performance than entropy-driven exploration?
(3) Does the algorithmic module built upon environment-specific design choices yield consistently reliable performance?
For (1), we first qualitatively demonstrate the effectiveness of B2PD through the toy experiments in Section~\ref{sec:toy_exper}. We then provide a quantitative comparison between B2PD and existing methods that leverage policy priors to accelerate online learning. For (2), we analyze the differences between policy prior distillation and the original algorithm in a toy experiment. Lastly, for (3), we conduct experiments using the B2PD algorithm with a fixed set of parameters across three challenging environments. 
 
\subsection{Three Continuous Control Environments}
We evaluate our method on three widely used continuous control benchmarks: MuJoCo~\cite{todorov2012mujoco}, PyBullet~\cite{coumans2016pybullet}, and DMControl~\cite{tassa2018deepmind}, as shown in Fig.~\ref{task_visual}. We provided a detailed overview of these environments.
\begin{figure*}[htbp]
\begin{center}
\centerline{\includegraphics[width=\textwidth]{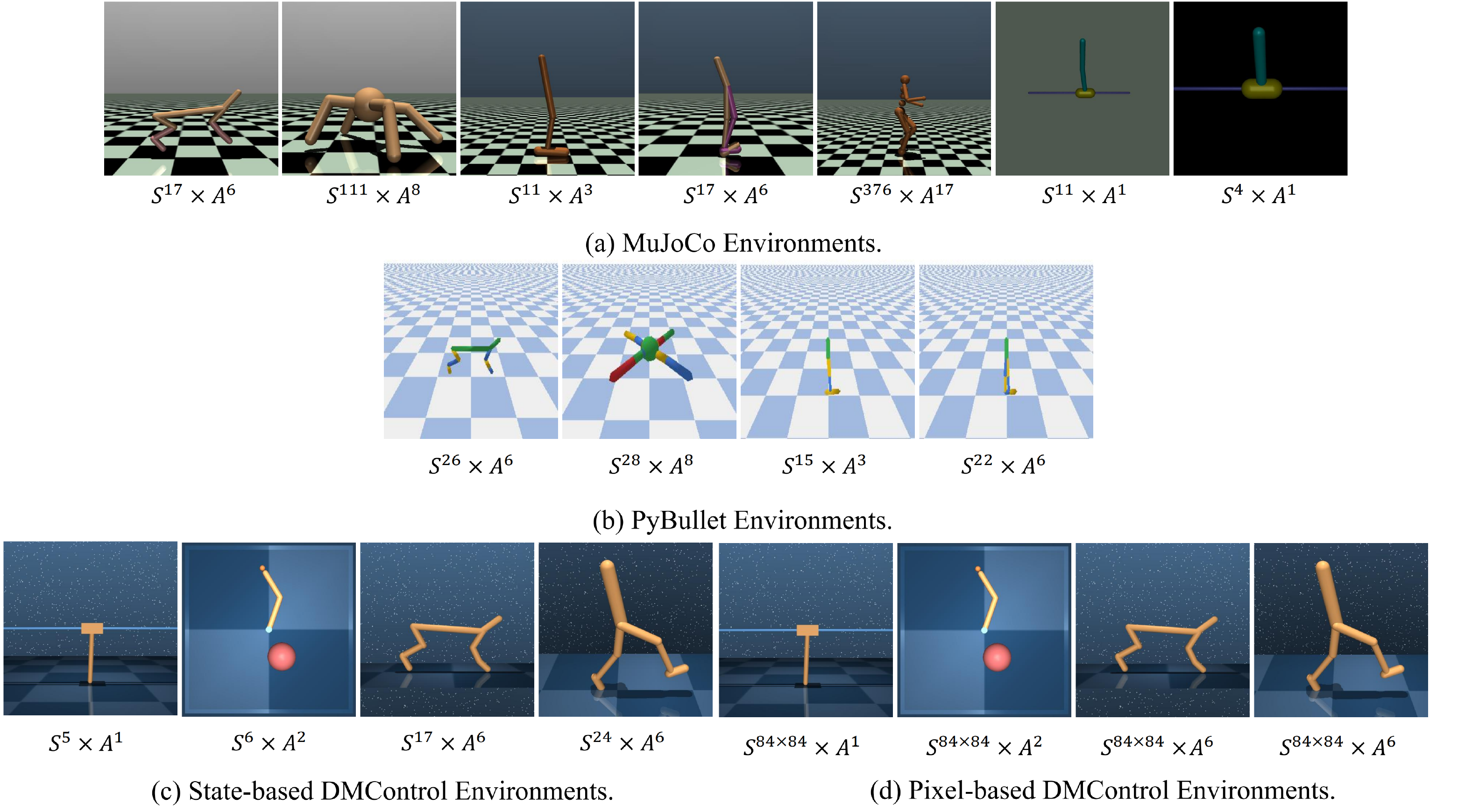}}
\caption{Images of the MuJoCo, PyBullet, and DMControl environments used in our experiments, including 7 MuJoCo continuous control tasks, 4 PyBullet continuous control tasks, 4 state-based DMControl continuous control tasks, and 4 pixel-based DMControl continuous control tasks.}
\label{task_visual}
\end{center}
\end{figure*}

\textbf{MuJoCo.}
MuJoCo is a fast and powerful physics engine that has become a standard platform for simulating complex robotic dynamics and control tasks. It is extensively utilized in the robotics and RL communities for training agents to perform tasks such as locomotion and manipulation, and it offers a diverse set of benchmarks for evaluating RL algorithms. In our experiments, we adopt seven challenging tasks from the MuJoCo environments: HalfCheetah, Ant, Hopper, Walker2d, Humanoid, InvertedDoublePendulum, and InvertedPendulum.

\textbf{PyBullet.}
PyBullet is an open-source physics engine that supports general-purpose physical simulation as well as high-fidelity modeling for robotics tasks. It provides an efficient control interface for fast simulation of rigid-body dynamics, collision detection, and various control problems. Compared to MuJoCo, PyBullet typically exhibits higher levels of environment-induced dynamical noise, making it a more challenging testbed for evaluating the robustness of reinforcement learning algorithms. In this work, we evaluated B2PD on four PyBullet benchmark tasks: HalfCheetahBulletEnv, AntBulletEnv, HopperBulletEnv, and Walker2DBulletEnv.

\textbf{DMControl.}
The DeepMind Control (DMControl) suite is a set of continuous control environments mainly designed for robotic manipulation tasks, but it also covers a wide range of industrial control challenges.
In our experiments, we evaluated the proposed algorithm on four continuous control tasks from the DMControl suite: cartpole-swingup, reacher-easy, cheetah-run, and walker-walk. Each task was assessed under both state-based and pixel-based inputs, with identical action spaces; the only difference lies in the form of the observations.

\subsection{Experimental Setup}

\textbf{Baselines.}
For state-based control settings with low-dimensional physical signal inputs, we compare B2PD against a diverse set of representative off-policy actor-critic baselines, including DDPG~\cite{lillicrap2015continuous}, TD3~\cite{fujimoto2018addressing}, and SAC~\cite{Haarnoja2018SAC}, as well as their enhanced variants that improve exploration (ICM~\cite{pathak2017curiosity}) or policy learning (ALH~\cite{quangaugmenting}, NNAC~\cite{shen2021theoretically}, and BAC~\cite{ji2023seizing}). These methods cover deterministic and stochastic policy optimization, intrinsic exploration, historical state augmentation, nearest-neighbor guided updates, and blended value estimation.

We further instantiate our method on both TD3 and SAC backbones (denoted as B2PD (TD3) and B2PD), where a CVAE-based behavior prior is learned to guide policy updates via prior distillation.

For pixel-based environments, we adopt RAD~\cite{laskin2020reinforcement} and DrQ-v2~\cite{yarats2022mastering} as baselines and integrate our method with both frameworks to assess its effectiveness under visual observations.
To ensure architectural consistency, the CVAE used in B2PD-RAD and B2PD-DrQ-v2 shares the same CNN backbone as the encoder $p_\omega$, while only introducing an additional decoder $q_\omega$ to generate the policy prior.

\textbf{Hyperparameters.}
In Section~\ref{sec:method}, we introduce three parameters: an action value prior coefficient $\xi$=0.1, the number of sampled actions $H$=10, and a behavior prior coefficient $\eta$=1.0.
Unless otherwise stated, we maintain consistent hyperparameter configurations across all experiments. 
Regarding model architecture, our method strictly replicates the structural framework of the SAC algorithm.
All experiments are conducted in a Linux environment equipped with a 56-core Xeon(R) 6133 CPU and 1 Nvidia RTX 3090 GPU.

\textbf{Random seeds.} To ensure the reproducibility of B2PD, we evaluated each algorithm using 10 random seeds. Furthermore, we maintained consistent seeds across all experiments, applying them to PyTorch, Numpy, Gym, and CUDA packages.

\textbf{Evaluation.}
No data or parameters were reused during evaluation, and each evaluation step consisted of 10 randomly initialized episodes. We train for 2 million timesteps on MuJoCo~\cite{todorov2012mujoco} and PyBullet~\cite{coumans2016pybullet} environments, while only 500K timesteps are used for training on the DMControl~\cite{tassa2018deepmind} environments. The policy is evaluated every 5000 timesteps.
We present the mean and 95\% confidence interval over the final 10 trials.

\subsection{Evaluation on Various Benchmark Suites}

\begin{figure*}[!htbp]
\begin{center}
\centerline{\includegraphics[width=\textwidth]{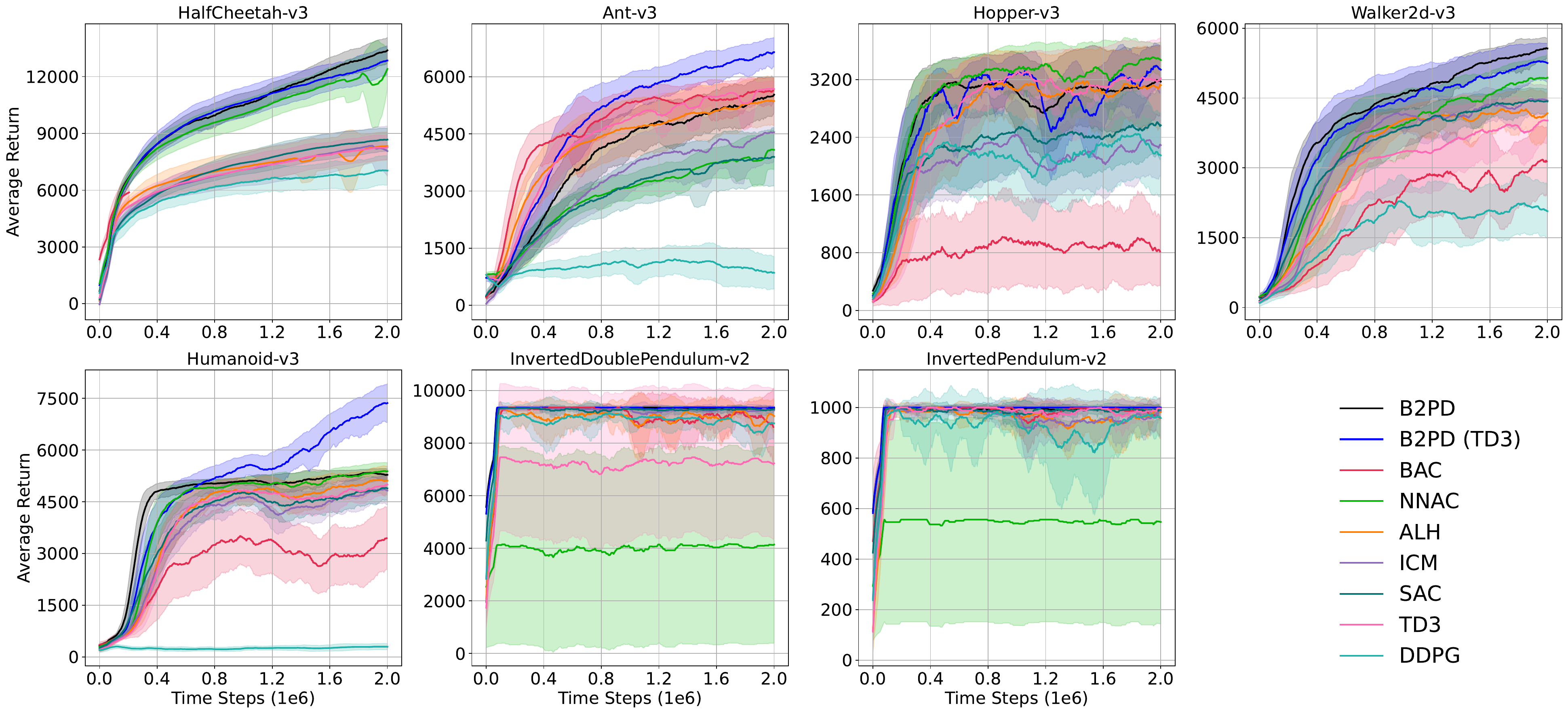}}
\caption{Learning curves on seven MuJoCo continuous control tasks. The shaded area captures a 95\% confidence interval around the average performance over 10 trials. Curves are smoothed uniformly for visual clarity.}
\label{fig:Over_results}
\end{center}
\end{figure*}

\textbf{Main results on MuJoCo environments.}
We compare our proposed B2PD algorithm to these baselines, and these comparisons are performed on seven MuJoCo continuous control tasks. 
Learning curves are displayed in Fig.~\ref{fig:Over_results}, and final results are listed in Table~\ref{tab:mojoco_res}.

 \begin{table*}[!ht]
\centering
\setlength\tabcolsep{3.5pt}
\begin{tabular}{lccccccc|cc}
\toprule
Tasks      &DDPG&TD3&SAC&ICM&ALH&NNAC&BAC&B2PD (TD3)&B2PD   \\  
\midrule
HalfCheetah-v3& 7030$\pm$772  &8259$\pm$649&8672$\pm$602&8109$\pm$1249&8339$\pm$730&12429$\pm$1048&11255$\pm$2027& 
12877$\pm$778&\bf 13386$\pm$694\\
 Ant-v3& 848$\pm$431 &5697$\pm$202&3906$\pm$766&4549$\pm$785&5361$\pm$644&4083$\pm$523&5631$\pm$335&\bf 6632$\pm$398 &5536$\pm$446\\
Hopper-v3&2150$\pm$578 &3216$\pm$561&2552$\pm$483&2285$\pm$430&3130$\pm$554&\bf 3467$\pm$253&806$\pm$490&3321$\pm$351&3175$\pm$489 \\
Walker2d-v3& 2045$\pm$587& 4008$\pm$485 &4419$\pm$259&4472$\pm$316&4159$\pm$457&4978$\pm$439 &3102$\pm$763&5247$\pm$427&\bf 5568$\pm$220\\

Humanoid-v3&297$\pm$90 &5003$\pm$255&4892$\pm$346&4844$\pm$406&5117$\pm$422&5385$\pm$252&3444$\pm$931&\bf 7359$\pm$573&5276$\pm$183\\
InvertedDouble-v2& 8766$\pm$580& 7293$\pm$2829&9293$\pm$82&9242$\pm$216&9088$\pm$542&4140$\pm$3750&8628$\pm$1439&\bf 9358$\pm$4&9344$\pm$20\\
InvertedPendulum-v2&954$\pm$90&995$\pm$9&991$\pm$16&985$\pm$28&985$\pm$29&546$\pm$404 &954$\pm$90&\bf 1000$\pm$0 &996$\pm$ 9\\
\bottomrule
\end{tabular}
\caption{Average return of the last 10 evaluation scores on MuJoCo environments over 10 random seeds, where $\pm$ captures a 95\% confidence interval. The maximum values of each row are bolded. ``InvertedDouble-v2" denotes the InvertedDoublePendulum-v2 task for brevity. 
}
\label{tab:mojoco_res}
\end{table*}

Fig.~\ref{fig:Over_results} shows that the proposed B2PD and B2PD (TD3) algorithms consistently outperform vanilla algorithms across multiple tasks, demonstrating significant performance gains.
As summarized in Table~\ref{tab:mojoco_res}, the proposed B2PD method achieves state-of-the-art (SOTA) performance on 6 out of 7 MuJoCo continuous control tasks. On average, B2PD improves the return by 22.2\% over the vanilla SAC baseline, while B2PD (TD3) surpasses the standard TD3 by 26.1\%, demonstrating consistent and significant performance gains across diverse continuous-control environments.

We further compare B2PD with representative exploration- and representation-based baselines. Although ICM, which relies on curiosity-driven intrinsic rewards, yields moderate improvements over SAC on HalfCheetah and Ant, its performance is inherently limited by sensitivity to stochastic dynamics and prediction noise, which may encourage exploration that is poorly aligned with task rewards.
The ALH method constructs actions by combining the previous-step representation with the current state embedding, which often produces highly correlated consecutive actions. This temporal redundancy restricts action diversity and leads to suboptimal exploration and performance degradation in several tasks.
We also observe that NNAC exhibits noticeable performance gains over SAC on HalfCheetah, Hopper, and Walker2d, while suffering performance drops on simpler tasks such as InvertedPendulum and InvertedDoublePendulum. This phenomenon suggests that, in simple environments where policies converge rapidly, effective policy guidance critically depends on the availability of reliable high-value anchors, in contrast to complex tasks where sustained exploration remains essential.

Additionally, we observe that the BAC algorithm exhibits noticeable performance degradation on the Hopper, Walker2d, and Humanoid tasks compared to SAC. We conjecture that excessive reliance on historical policies may bias action selection toward exploitation, thereby impairing effective exploration in environments with complex and highly dynamic behaviors. In contrast, B2PD consistently enhances performance by generating diverse, high-value policy priors that explicitly anchor policy updates within high-quality behavior support.

\textbf{Main results on state and pixel-based DMControl environments.}
We evaluate the proposed B2PD algorithm built upon SAC and TD3 on the state-based DMControl benchmark. To further assess its generality in visual-input scenarios, we integrate B2PD with RAD and DrQ-v2, and report the corresponding learning curves in Fig.~\ref{fig:Over_results_dmc} and final performance comparisons in Table~\ref{tab:mojoco_res_dmc}.
Experimental results demonstrate that the proposed B2PD algorithm significantly accelerates policy convergence. Across all four tasks, both B2PD and B2PD (TD3) consistently outperform the vanilla SAC and TD3 algorithms, respectively, further demonstrating the effectiveness and generality of our method.

\begin{figure*}[!htbp]
\begin{center}
\centerline{\includegraphics[width=\textwidth]{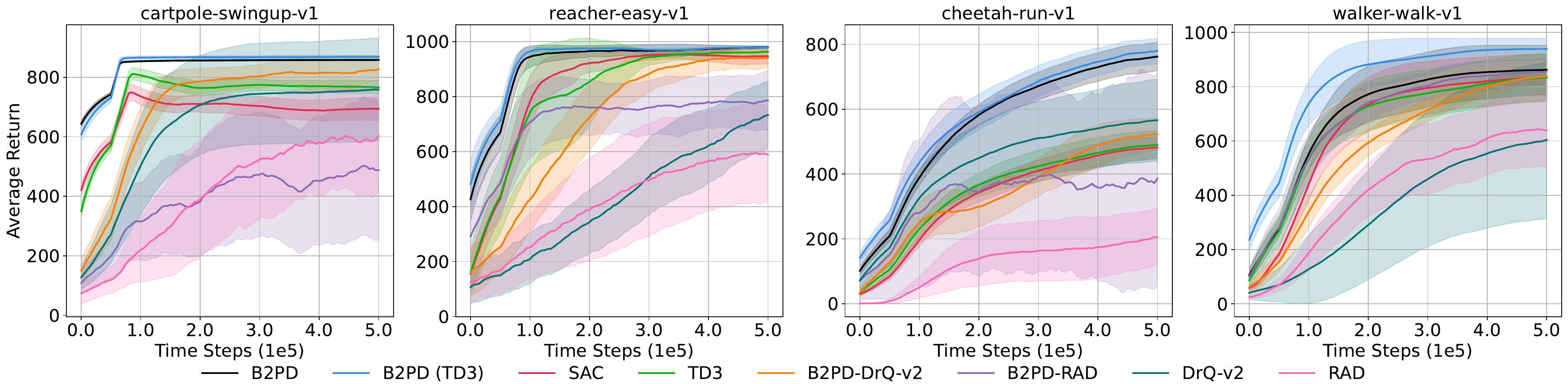}}
\caption{Learning curves on four DMControl tasks. The shaded area captures a 95\% confidence interval around the average performance over 10 trials. Curves are smoothed uniformly for visual clarity.}
\label{fig:Over_results_dmc}
\end{center}
\end{figure*}

We further observe that pixel-based methods such as RAD and DrQ-v2 suffer substantial performance degradation on several control tasks, including reacher-easy, cheetah-run, and walker-walk, compared with state-based methods. This degradation highlights the optimization challenges imposed by high-dimensional visual observations.
In contrast, when integrated with B2PD, DrQ-v2 achieves substantial performance gains: it surpasses state-based methods on cartpole-swingup and walker-walk, and attains competitive performance on reacher-easy. These results demonstrate that B2PD effectively mitigates the sample inefficiency of pixel-based reinforcement learning and exhibits generality across both state- and pixel-based observation modalities.
 \begin{table*}[!htbp]
\centering
\setlength\tabcolsep{3.5pt}
\begin{tabular}{lcccc|cccc}
\toprule
Settings      &\multicolumn{4}{c}{State} & \multicolumn{4}{c}{Pixel}\\  
\midrule
Tasks      &TD3&SAC&B2PD (TD3)&B2PD & RAD&DrQ-v2&B2PD-RAD &B2PD-DrQ-v2\\  
\midrule
cartpole-swingup-v1& 766$\pm$23  &695$\pm$39&\bf 869$\pm$5& 858$\pm$3&595$\pm$191& 
758$\pm$173&487$\pm$234&\bf 828$\pm$36\\
reacher-easy-v1&963$\pm$13 &946$\pm$29&979$\pm$6&\bf 980$\pm$5& 579$\pm$173&738$\pm$121&777$\pm$110&\bf 938$\pm$38 \\
cheetah-run-v1& 490$\pm$43& 482$\pm$32 &\bf 782$\pm$39&764$\pm$43&202$\pm$88 &\bf 566$\pm$127&400$\pm$327& 525$\pm$47\\
walker-walk-v1& 834$\pm$89& 835$\pm$85 &\bf 939$\pm$39&863$\pm$90&631$\pm$136 &605$\pm$290&\bf856$\pm$47& 840$\pm$29\\

\bottomrule
\end{tabular}
\caption{Average return of the last 10 evaluation scores on DMControl environments over 10 random seeds, where $\pm$ captures a 95\% confidence interval. The maximum values of each row are bolded.}
\label{tab:mojoco_res_dmc}
  
\end{table*}

\textbf{Main results on PyBullet environments.}
We evaluate B2PD by combining it with TD3 and SAC, and compare it against ICM, where intrinsic motivation is implemented following \cite{pathak2017curiosity}. We conducted experiments on the PyBullet benchmark suite, training each algorithm for 2 million timesteps using 10 random seeds. The learning curves are summarized in Fig.~\ref{fig:Over_results_pybullet_all}, and the final performance is reported in Table~\ref{tab:mojoco_res_pybullet}.
 
\begin{figure*}[!htbp]
\begin{center}
\centerline{\includegraphics[width=\textwidth]{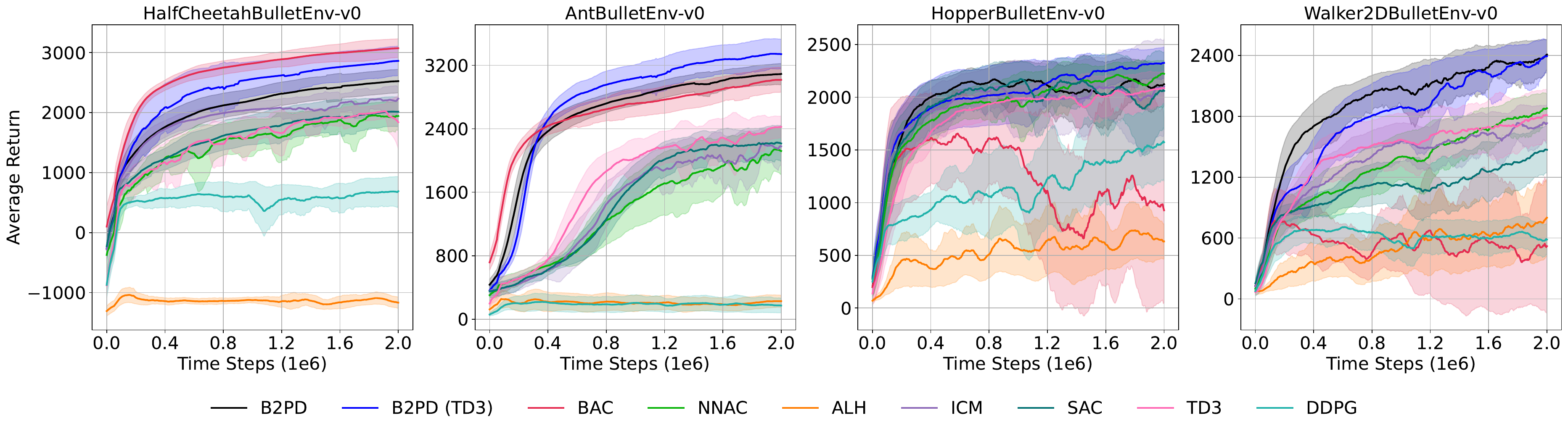}}
\caption{Learning curves on four PyBullet continuous control tasks. The shaded area captures a 95\% confidence interval around the average performance over 10 trials. Curves are smoothed uniformly for visual clarity.}
\label{fig:Over_results_pybullet_all}
\end{center}
\end{figure*}

  \begin{table*}[!htbp]
\centering
\setlength\tabcolsep{3.5pt}
 
\begin{tabular}{lccccccc|cc} 
\toprule
Tasks      &DDPG&TD3&SAC&ICM&ALH&NNAC&BAC&B2PD (TD3)& B2PD\\  \midrule
HalfCheetahBulletEnv-v0& 693$\pm$235  &1820$\pm$482& 2015$\pm$246&2231$\pm$188&-1174$\pm$94&1948$\pm$235&\bf 3075$\pm$157& 2863$\pm$247&2524$\pm$195\\
AntBulletEnv-v0&170$\pm$131 &2423$\pm$140&2216$\pm$230&2176$\pm$249&224$\pm$75&2131$\pm$289&3018$\pm$155&\bf 3338$\pm$190& 3091$\pm$131 \\
HopperBulletEnv-v0& 1586$\pm$365& 2082$\pm$157 & 2075$\pm$374&2106$\pm$450&640$\pm$174&2218$\pm$127&916$\pm$868&\bf 2326$\pm$153&2123$\pm$214 \\
Walker2DBulletEnv-v0& 583$\pm$153& 1819$\pm$247 & 1480$\pm$228&1736$\pm$174&800$\pm$386&1874$\pm$146&532$\pm$674&2390$\pm$166&\bf 2411$\pm$149\\

\bottomrule
\end{tabular}
\caption{Average return of the last 10 evaluation scores on Pybullet environments over 10 random seeds, where $\pm$ captures a 95\% confidence interval. The maximum values of each row are bolded.}
\label{tab:mojoco_res_pybullet}
\end{table*}

Across all four PyBullet continuous control tasks, both B2PD and B2PD (TD3), instantiated on top of SAC and TD3 respectively, consistently outperform their vanilla counterparts. These substantial and stable performance gains indicate that B2PD effectively improves the practical sample efficiency and robustness of online RL algorithms under noisy dynamics.

In comparison, ICM achieves only moderate improvements on HalfCheetahBulletEnv and Walker2DBulletEnv, while providing marginal benefits over vanilla SAC on the remaining tasks. We conjecture that the intrinsic reward signal in ICM is susceptible to stochastic dynamics noise in PyBullet environments, which introduces non-stationary learning targets and destabilizes policy optimization. Consequently, curiosity-driven exploration does not always translate into task-relevant performance gains.
We further observe that ALH, as an extension of TD3, underperforms the vanilla TD3 across all four PyBullet tasks. Unlike MuJoCo, PyBullet environments exhibit stronger dynamics stochasticity. By augmenting the actor’s decision with the previous timestep’s state representation, ALH amplifies noise propagation across timesteps, leading to degraded action quality and inferior performance.

Additionally, the BAC algorithm demonstrates competitive performance on HalfCheetahBulletEnv and AntBulletEnv, but suffers noticeable degradation on HopperBulletEnv and Walker2DBulletEnv, revealing pronounced performance instability across environments. We attribute this behavior to BAC’s heavy reliance on early-stage experiences, which biases policy updates toward premature exploitation and suppresses optimistic exploration. This limitation becomes particularly detrimental in tasks with sensitive dynamics or narrow stability regions, and is consistent with BAC’s observed performance degradation on the Hopper and Humanoid tasks in the MuJoCo benchmark.

\subsection{Comparison of Distillation Loss for B2PD}

To constrain the deviation between the current actor’s decisions and the expert prior, we draw inspiration from knowledge distillation~\cite{sun2024logit} by introducing commonly used KL divergence~\cite{hinton2015distilling} and L2 distance metrics. The experimental results are shown in Fig.~\ref{fig:Over_results_mujoco_KL}.
\begin{figure*}[!htbp]
\begin{center}
\centerline{\includegraphics[width=0.98\textwidth]{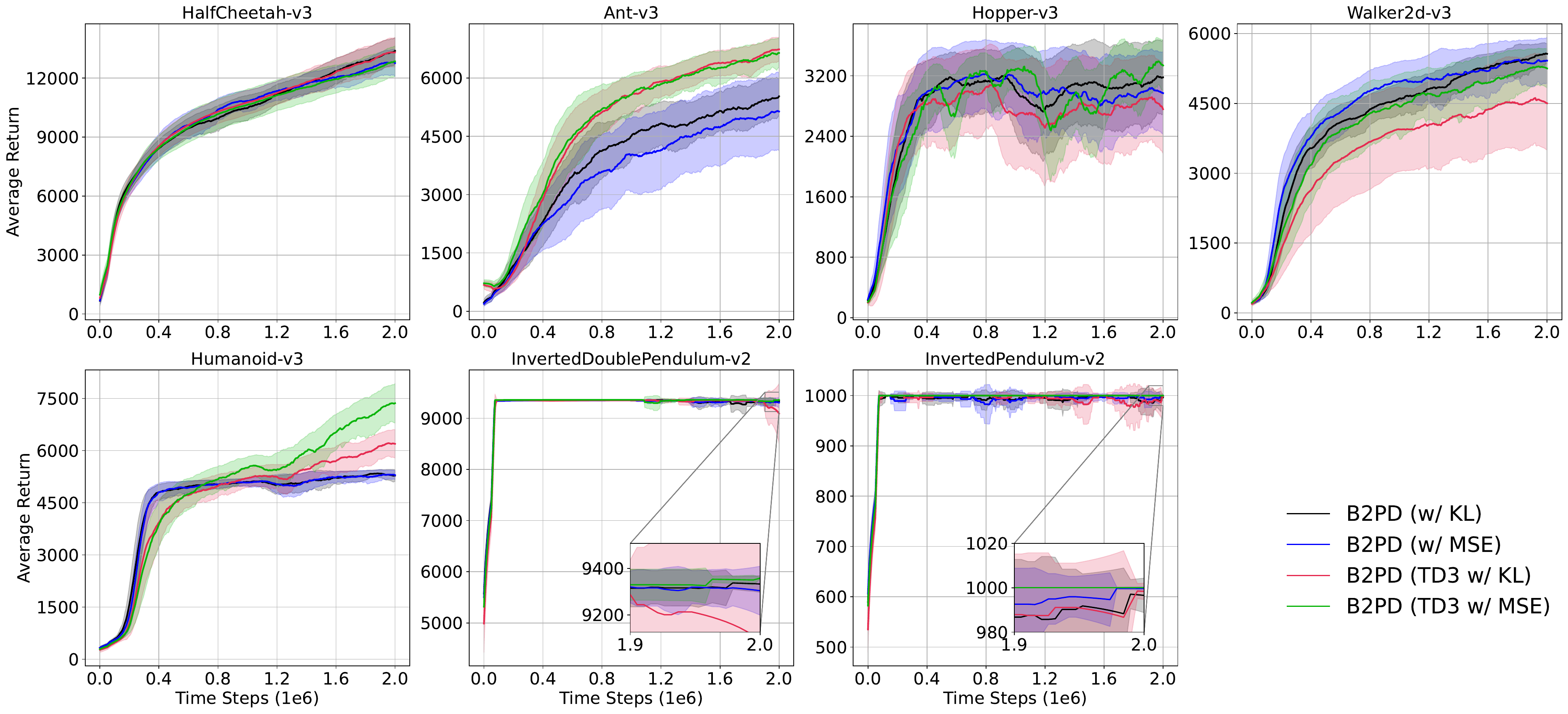}}
\caption{Learning curves on seven MuJoCo continuous control tasks. The shaded area captures a 95\% confidence interval around the average performance over 10 trials. Curves are smoothed uniformly for visual clarity.}
\label{fig:Over_results_mujoco_KL}
\end{center}
\end{figure*}

It can be observed that using KL divergence as the optimization objective leads to a significant performance gain in the B2PD framework, while MSE loss achieves superior results under the B2PD (TD3) setting. We attribute this difference to the distinct nature of the policy structures: KL divergence loss provides greater stability for stochastic policies parameterized by Gaussian distributions, whereas MSE loss offers more direct and explicit supervision for deterministic policy updates. This finding highlights the importance of selecting distillation objectives that are aligned with the underlying policy structure, thereby enabling more efficient knowledge transfer and policy convergence.

\subsection{Comparison with Additional Noise Scheduling Methods}
Since noise injection has been shown to stabilize Q-value estimation, we systematically evaluate the effectiveness of different noise-scheduling strategies in both the B2PD and SAC frameworks. Specifically, we compare B2PD with SDA Noise, and several alternative noise-scheduling approaches, including standard Gaussian noise~\cite{fujimoto2018addressing}, linearly decayed Gaussian noise~\cite{hollenstein2022action}.
To further isolate and understand the role of noise-scheduling mechanisms within SAC, we additionally apply the same set of noise-scheduling strategies to the vanilla SAC algorithm. All configurations are trained for 2M environment steps using 10 random seeds. The detailed results are summarized in Fig.~\ref{fig:Over_results_ant_noise}.

\begin{figure}[!ht]
  \centering
  \includegraphics[width=0.5\textwidth]{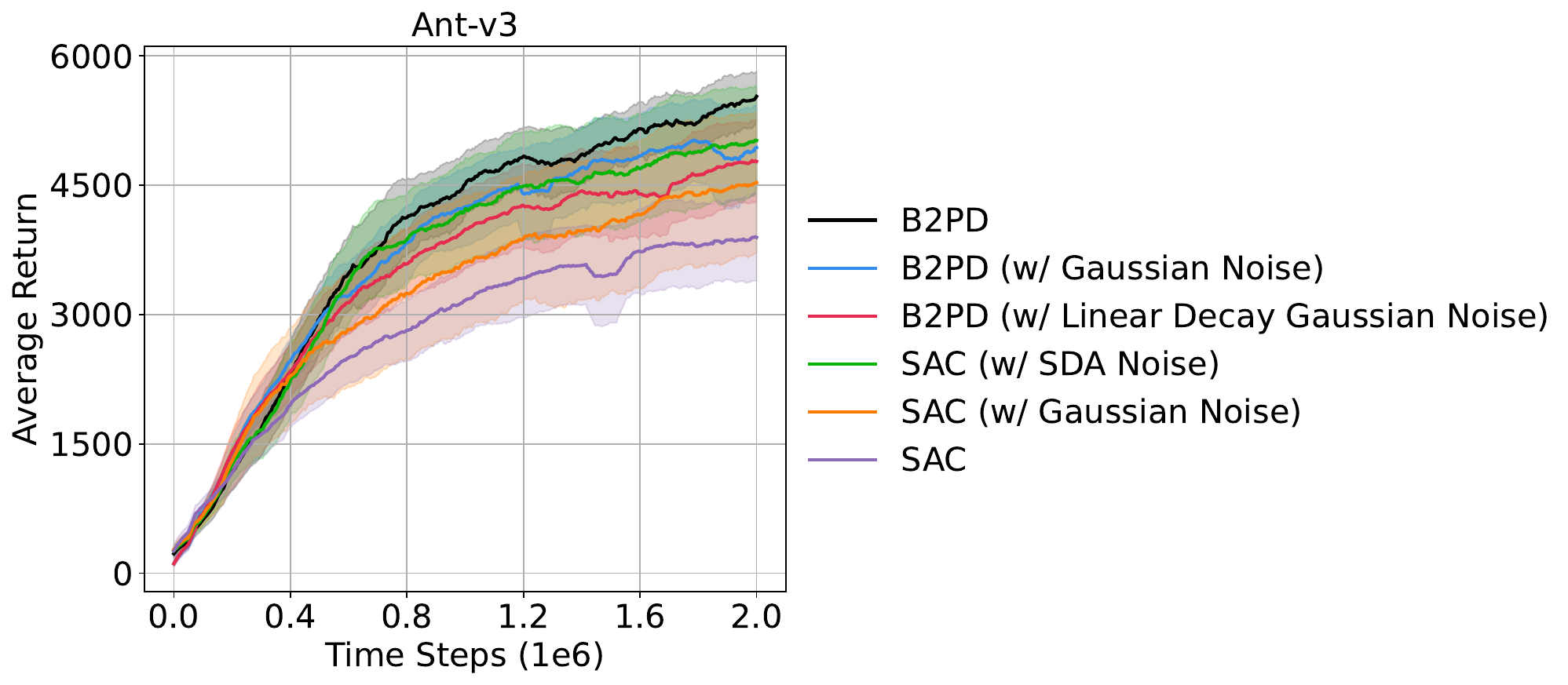}
  \caption{Comparison of the impact of different policy noises on the performance of the B2PD algorithm. The shaded area represents the 95\% confidence interval over the average performance of 10 trials. Curves are uniformly smoothed for visual clarity.}
  \label{fig:Over_results_ant_noise}
\end{figure}

As shown in Fig.~\ref{fig:Over_results_ant_noise}, B2PD (w/ Linear Decay Gaussian Noise) performs worse than the standard B2PD configuration. This degradation suggests that the progressive reduction of noise magnitude during training, imposed by the linear decay schedule, may weaken the smoothing effect that is critical for stable Q-value estimation. Furthermore, both B2PD and SAC (w/ SDA Noise) respectively outperform B2PD (w/ Gaussian Noise) and SAC (w/ Gaussian Noise), highlighting the effectiveness of SDA Noise modulation.

\subsection{Does B2PD Improve Exploration Efficiency?}
To evaluate whether B2PD improves exploration efficiency, we compare B2PD with SAC by examining both the task performance and the policy entropy throughout training on the Ant benchmark, as shown in Fig.~\ref{ablation_performace_entropy}.
\begin{figure}[!htbp]
\centering
\includegraphics[width=0.5\textwidth]{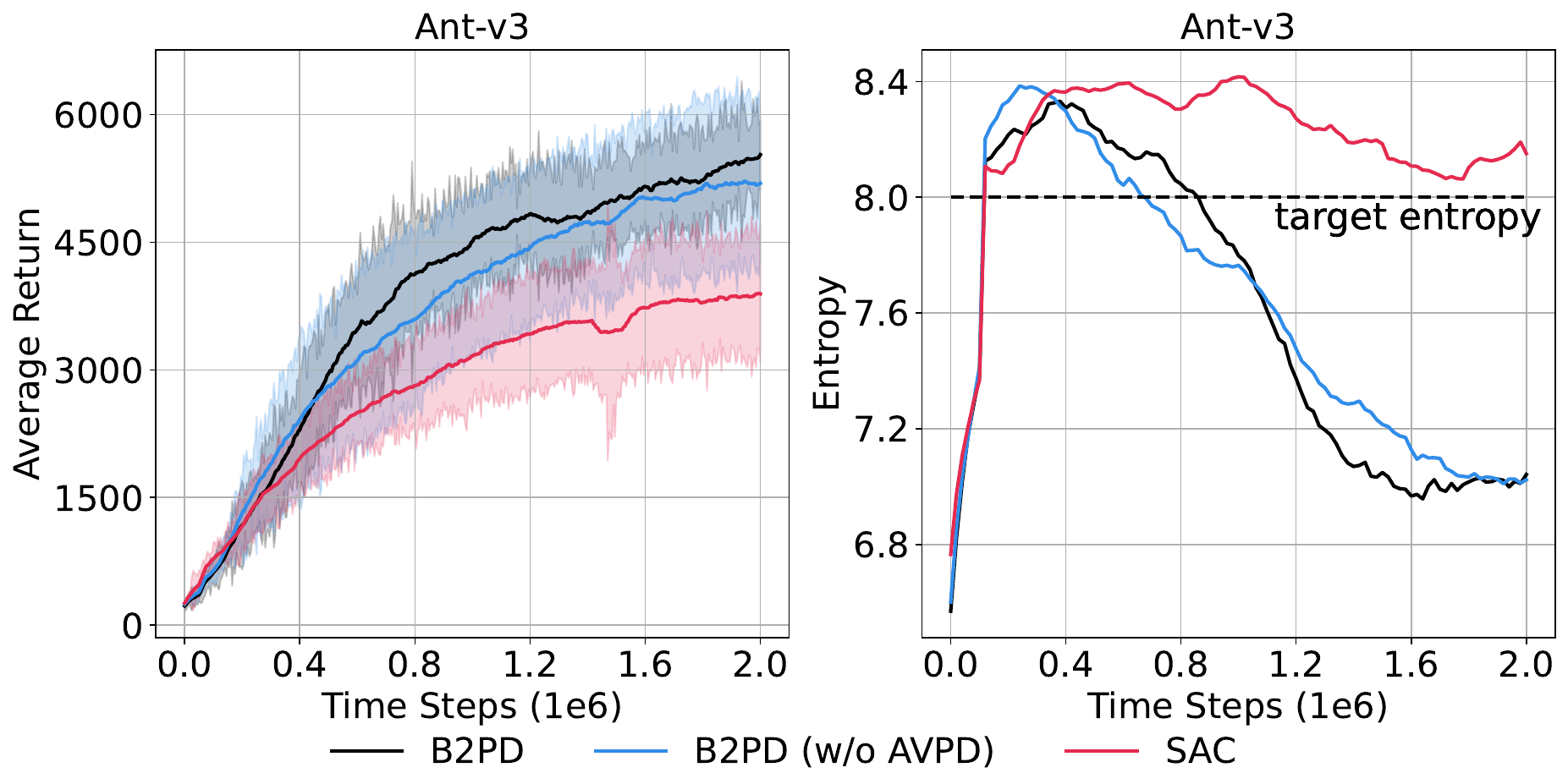}
\caption{Comparison of evaluation performance and policy entropy distribution over 2M training timesteps. Curves are uniformly smoothed for clarity.}
\label{ablation_performace_entropy}
 
\end{figure}

As shown in Fig.~\ref{ablation_performace_entropy}, in the early stages of training, both B2PD and B2PD (w/o AVPD) maintain high-entropy action distributions comparable to vanilla SAC, thereby preserving sufficient exploration. As training progresses, both B2PD and B2PD (w/o AVPD) rapidly converge to lower-entropy policies, highlighting the effectiveness of behavior-prior distillation in stabilizing policy optimization. These observations suggest that behavior priors effectively guide policy updates while reducing inefficient exploration.

Although B2PD (w/o AVPD) and B2PD exhibit a similar exploration pattern, B2PD (w/o AVPD) underperforms the full B2PD framework in terms of final performance. This result indicates that action-value prior distillation enhances the quality of the generated policy support set by producing more informative action priors. Consequently, the actor is guided toward higher-quality policy updates, leading to more effective policy optimization and superior overall performance.

\subsection{Mitigating Value Estimation Error with B2PD}
To evaluate the effectiveness of B2PD in mitigating value estimation errors, we experimentally compare the evolution of the discrepancy between the predicted Q-values and the true discounted returns during training. Specifically, we contrast B2PD with the SAC, and report the corresponding value estimation errors as shown in Fig.~\ref{ablation_overestimation}.

\begin{figure}[!htbp]
\centering
\includegraphics[width=0.5\textwidth]{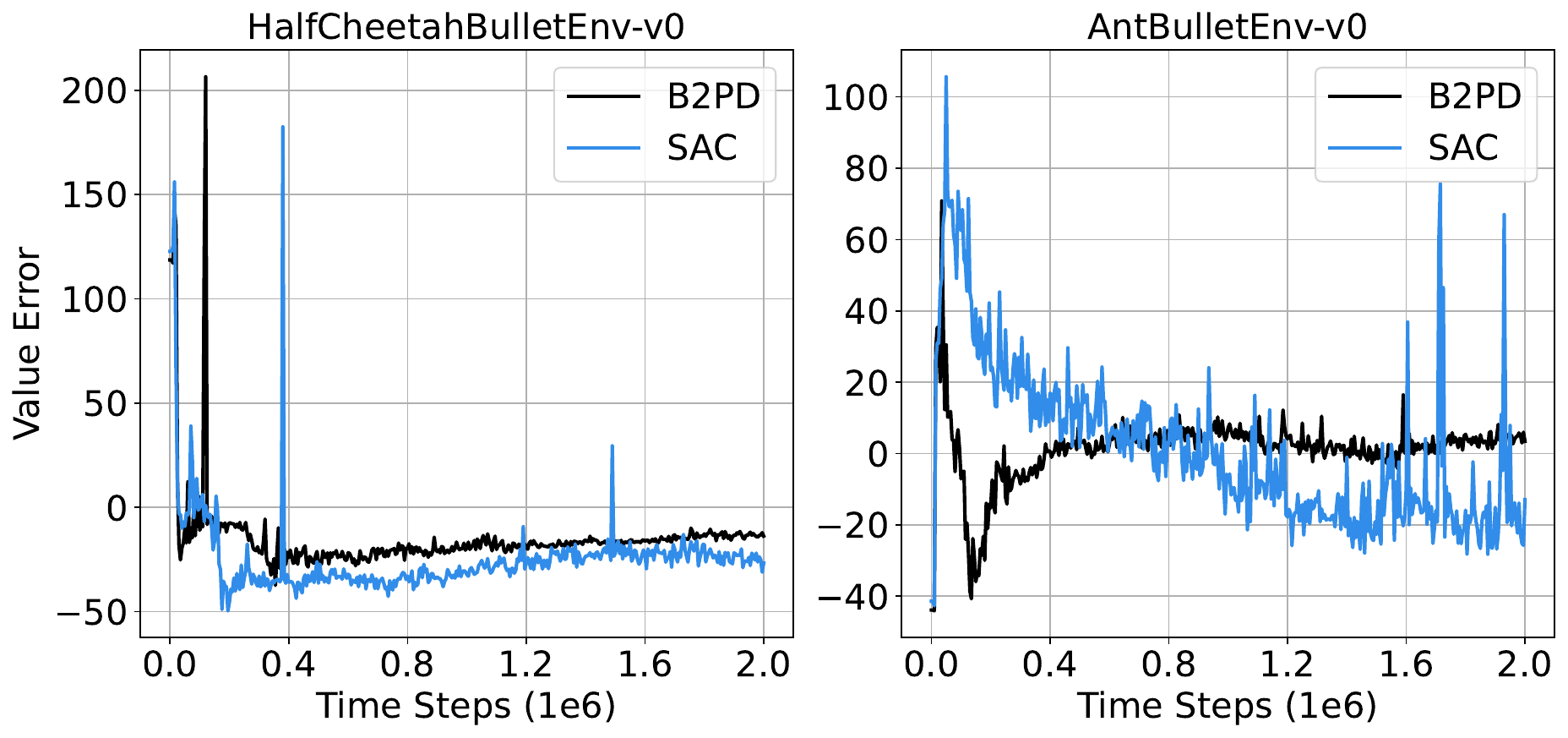}
\caption{The overestimation bias of value estimates in the B2PD and SAC algorithms is measured over 2 million time steps.}
\label{ablation_overestimation}
 
\end{figure}

As shown in Fig.~\ref{ablation_overestimation}, B2PD exhibits faster value convergence on the two evaluated tasks. Specifically, compared to SAC, the Q-values predicted by B2PD are closer to the true discounted returns and exhibit lower variance. In particular, on the AntBulletEnv task, the value estimation error of B2PD approaches zero in the later stages of training, whereas SAC displays more oscillatory errors. We attribute this to the fact that SAC still suffers from blind exploration in the later training stages, leading to non-stationary value estimates. In contrast, B2PD leverages an expert support-set policy as an explicit constraint, providing more meaningful gradients for policy updates and reducing ineffective exploration.

\subsection{Hyperparameter Study}
B2PD introduces three hyperparameters: the action-value prior distillation coefficient $\xi$, the number of sampled behavioral priors $H$, and the behavior prior distillation coefficient $\eta$. 
To demonstrate the parameter sensitivity of B2PD, we trained for 1 million timesteps on PyBullet tasks over 10 different random seeds.

\begin{figure}[!htbp]
\begin{center}
\centerline{\includegraphics[width=0.5\textwidth]{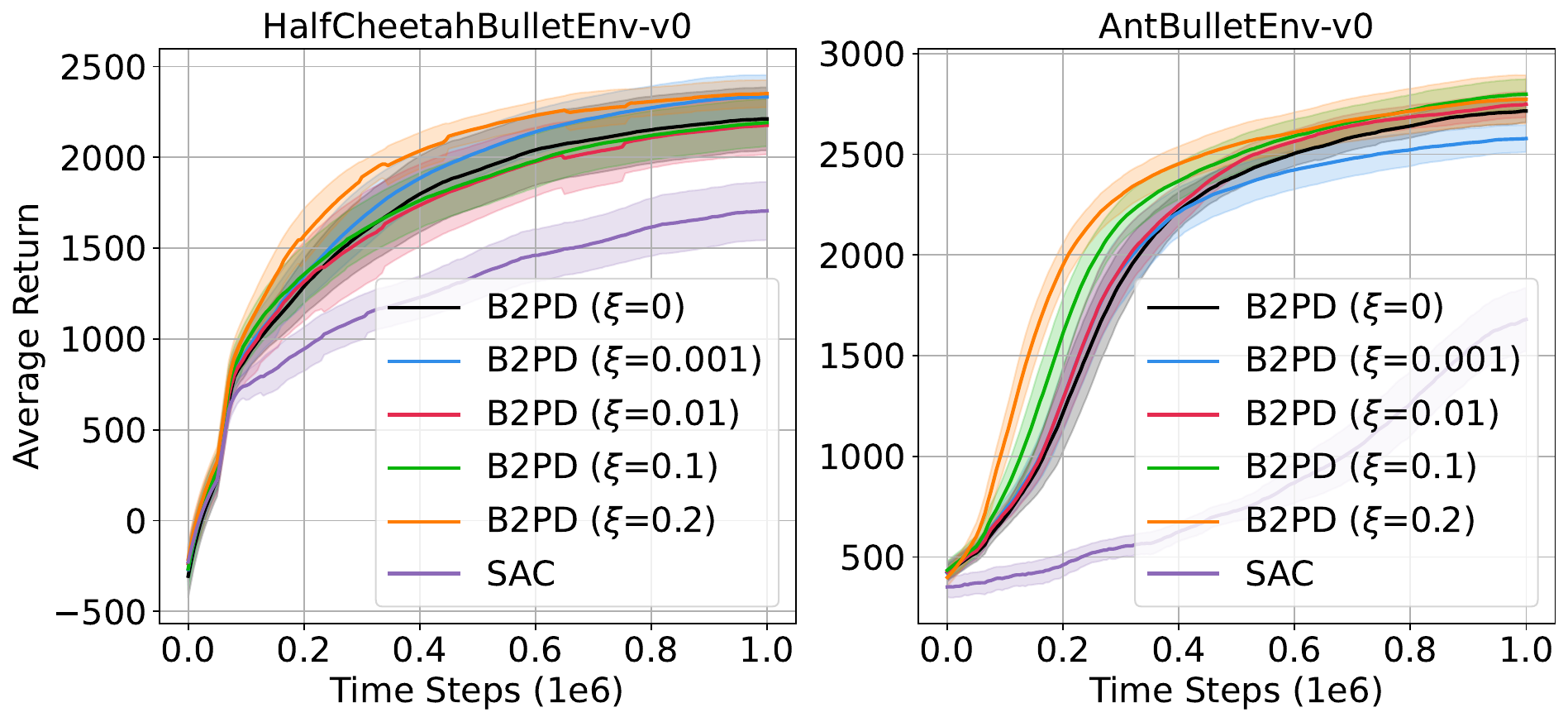}}
\caption{Sensitivity analysis with respect to parameter $\xi$. The shaded area captures a 95\% confidence interval around the average performance over 10 trials. Curves are smoothed uniformly for visual clarity.}
\label{fig:Over_results_pybullet_xi}
\end{center}
\end{figure}
\textbf{Action-value prior distillation coefficient $\xi$.}
The action-value prior distillation coefficient $\xi$ is a key hyperparameter in B2PD, as it directly controls the trade-off between behavior reconstruction and high-value action generation in the CVAE training objective. 
When $\xi = 0$, the proposed method degenerates to B2PD without action-value prior distillation (denoted as B2PD w/o AVPD).
We evaluate the effect of $\xi$ on four PyBullet control tasks using the SAC backbone, with $\xi \in \{0, 0.001, 0.01, 0.1, 0.2\}$. 
The results are summarized in Fig.~\ref{fig:Over_results_pybullet_xi}.

From Fig.~\ref{fig:Over_results_pybullet_xi}, we make the following observations. 
First, both B2PD and its degenerate variant B2PD ($\xi = 0$) consistently outperform the vanilla SAC baseline, indicating that incorporating policy priors into the update process already provides a clear performance advantage.
Second, introducing a positive distillation weight $\xi$ further improves performance across all tasks, demonstrating the effectiveness of the Q-guided loss term $\mathcal{H}_{\text{Q}}(\omega)$ in biasing the generative policy toward higher-value actions.
This result highlights the importance of explicitly injecting value-aware guidance during policy optimization.

Moreover, compared to B2PD ($\xi = 0$), B2PD exhibits robustness to the choice of $\xi$ in the range $[0.01, 0.1]$ across HalfCheetahBulletEnv and AntBulletEnv environments, with all settings yielding substantial performance gains. In contrast, when $\xi = 0.2$, there is a slight performance drop compared to $\xi = 0.1$. We attribute this to the fact that a larger Q-value-guided policy support set learning may generate actions that deviate more from the current policy, potentially leading to training instability.

\textbf{Number of sampled behavioral priors $H$.}
When the behavioral prior policy $G$ is known, $H$ should be set to a sufficiently large value to optimally sample actions with the highest Q-value. However, in our practical setting, we use a CVAE to approximate the support set of the behavior policy prior and sample high-value actions from it. Here, $H$ takes on the role of balancing optimistic optimization and constrained exploration.
To explore the influence of $H$, we conducted experiments on four tasks with a fixed hyperparameter $\xi$=0.1.
First, we evaluated the impact of varying $H$ on the HopperBulletEnv and Walker2DBulletEnv tasks, using $H \in \{1, 10, 20\}$ with 10 random seeds per setting.
In these settings, both Q-values and generative policy priors are leveraged to guide the policy update process.
The results demonstrate that expert prior effectively enhances performance, as illustrated in Fig.~\ref{fig:Over_results_pybullet_H}.
\begin{figure}[!htbp]
\begin{center}
\centerline{\includegraphics[width=0.5\textwidth]{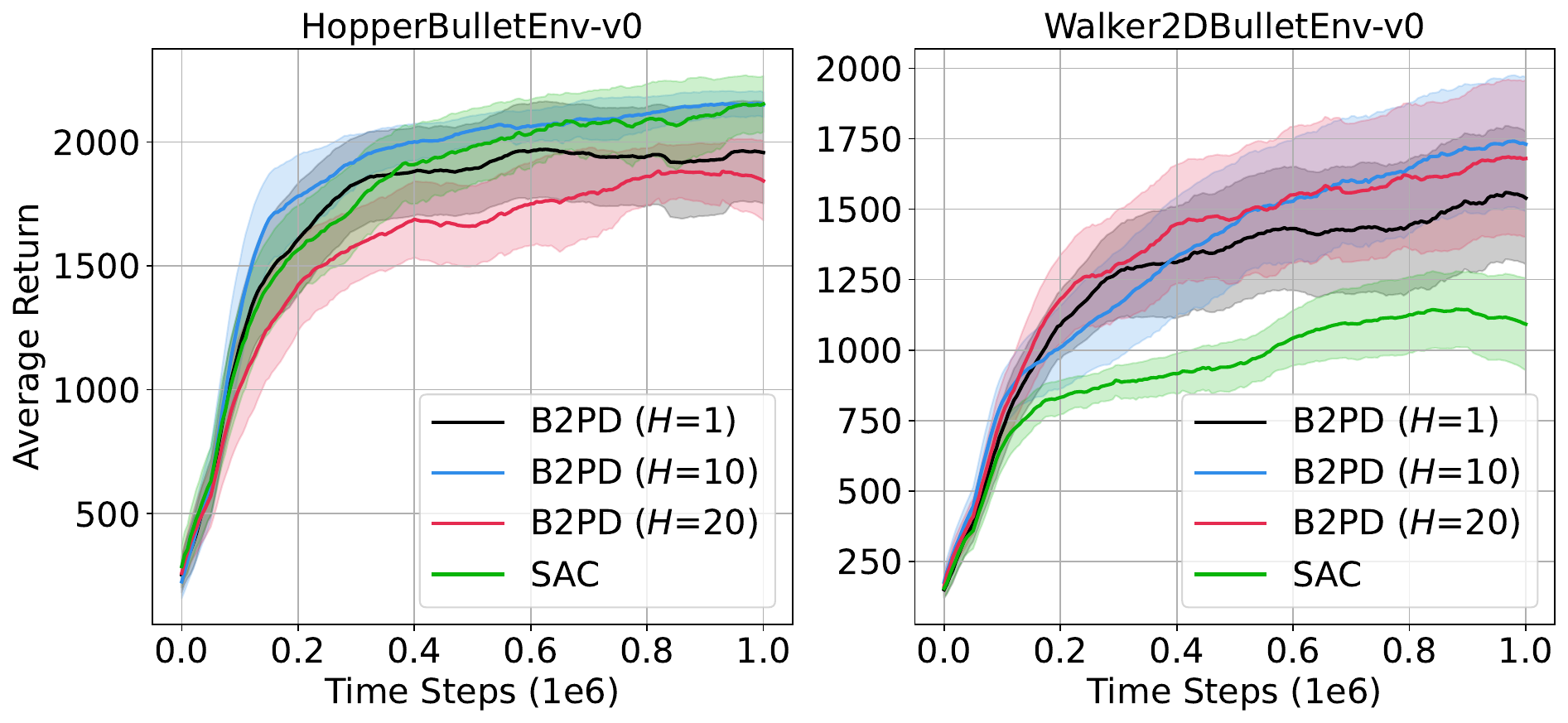}}
\caption{Sensitivity analysis with respect to parameter $H$. The shaded area captures a 95\% confidence interval around the average performance over 10 trials. Curves are smoothed uniformly for visual clarity.}
\label{fig:Over_results_pybullet_H}
\end{center}
\end{figure}

We observe that B2PD significantly improves the performance of the vanilla SAC algorithm across varying numbers of policy priors $H$. This indicates that incorporating expert policy priors as an update anchor can effectively mitigate the issue of suboptimal exploration induced by Q-guidance. However, as $H$ increases, the likelihood of sampling previously seen low-quality actions also increases, which can introduce instability during training. Based on this empirical observation, we set $H = 10$ in all experiments to balance performance gains and training stability.

\textbf{Behavior prior distillation coefficient $\eta$.}
The behavior prior distillation coefficient $\eta$ controls the trade-off between behavior prior policy distillation and the maximum-entropy policy optimization objective.
When $\eta = 0$, the proposed B2PD framework degenerates into a variant without the behavior prior policy distillation module, denoted as B2PD (w/o BPPD). In this case, a modest performance improvement can still be observed in the AntBulletEnv task, indicating that the remaining components of B2PD contribute to policy optimization.
\begin{figure}[!htbp]
\begin{center}
\centerline{\includegraphics[width=0.5\textwidth]{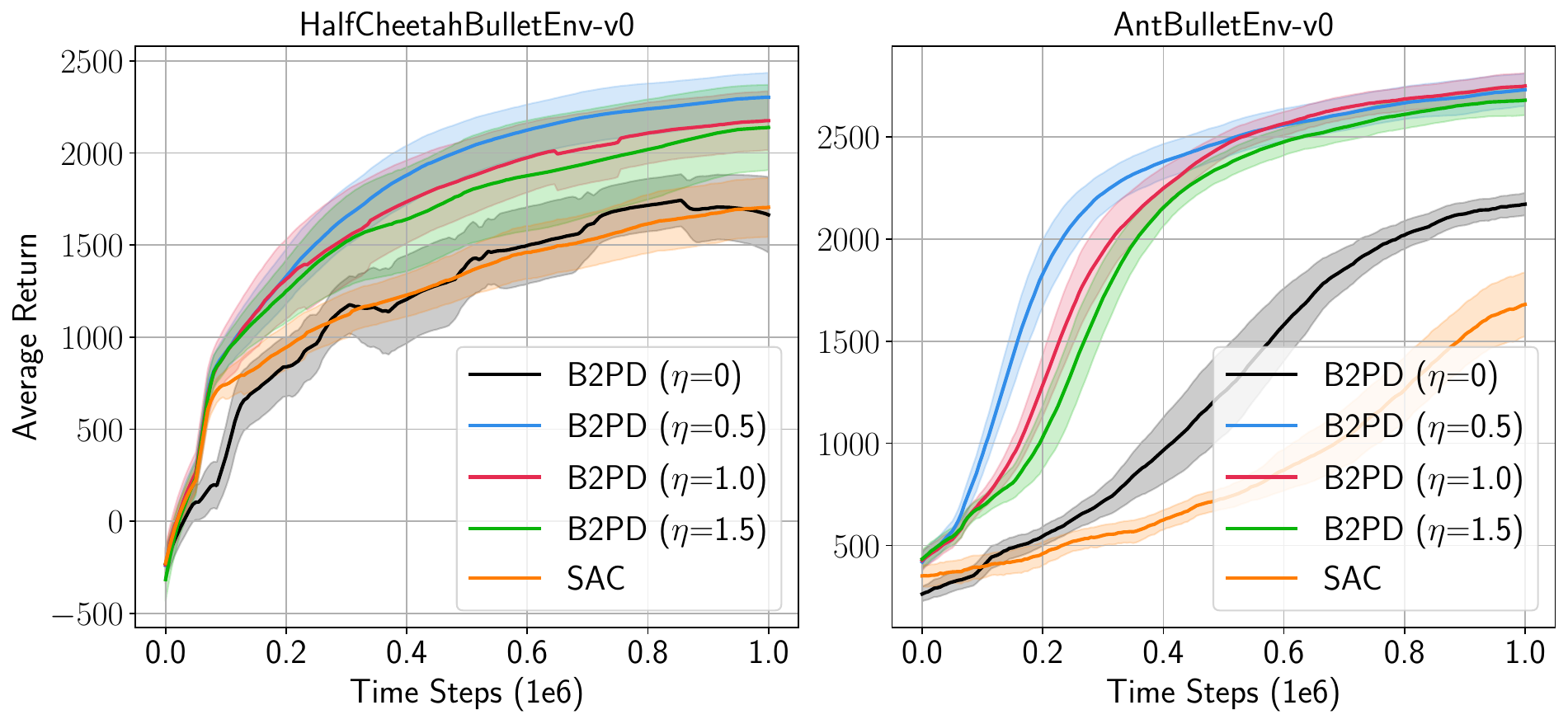}}
\caption{Sensitivity analysis with respect to parameter $\eta$. The shaded area captures a 95\% confidence interval around the average performance over 10 trials. Curves are smoothed uniformly for visual clarity.}
\label{fig:Over_results_pybullet_eta}
\end{center}
\end{figure}

As shown in Fig.~\ref{fig:Over_results_pybullet_eta}, B2PD achieves its best performance on AntBulletEnv when $\eta = 1.0$. In contrast, for the HalfCheetahBulletEnv task, $\eta = 0.5$ yields the optimal performance, suggesting that the optimal choice of $\eta$ is task-dependent and requires careful tuning to fully exploit the benefit of behavior prior distillation.

\textbf{Can the B2PD algorithm perform policy distillation under noisy priors?}
In online RL, if the behavioral priors concentrate in low-value regions, they exacerbate the sample-efficiency problem and undermine the effectiveness of policy distillation. To assess the robustness of B2PD under such conditions, we uniformly inject state noise during the inference phase of the prior, defined as $s_t^{\star} = s_t + \mathcal{N}(0, \epsilon_s^2)$, where $\epsilon_s \in \{0.1, 0.3, 1.0\}$.
We construct two versions of behavioral prior distributions. The first is inferred from noisy states, where Gaussian noise with scale $\epsilon_s$ is injected into the state inputs. The second is inferred from clean states without corruption. Given these priors, we use the Q-network to select the expert policy anchor, formalized as
\begin{equation}
    {\tilde {a}} = {\arg\mathop{\max}\limits_{a_h}} \left(Q_{\theta^{'}}(s_t,a_h)\right), a_h \in \Pi_{{G_\omega}(\cdot \mid \{s_t,s_t^{\star}\})}.
\end{equation}

To further assess whether B2PD can filter out high-quality priors even when the action support itself becomes unreliable, we inject action noise $\epsilon_a$ into the action support set. This noise is applied to each action proposal before anchor selection. The noisy-anchor filtering process is then expressed as
\begin{equation}
\tilde{a}
= \arg\max_{a \in \{a_h,\, a_h^\star\}}
Q_{\theta'}(s_t, a), 
a_h \sim \Pi_{G_\omega}(\cdot | s_t), 
a_h^\star = a_h + \epsilon_a.
\end{equation}

We conduct experiments on the HalfCheetahBulletEnv and AntBulletEnv tasks for 1M timesteps, and report results averaged over 10 random seeds. The corresponding performance curves are shown in Fig.~\ref{ab_noise_rl}.
\begin{figure}[!htbp]
    \centering
    \includegraphics[width=0.5\textwidth]{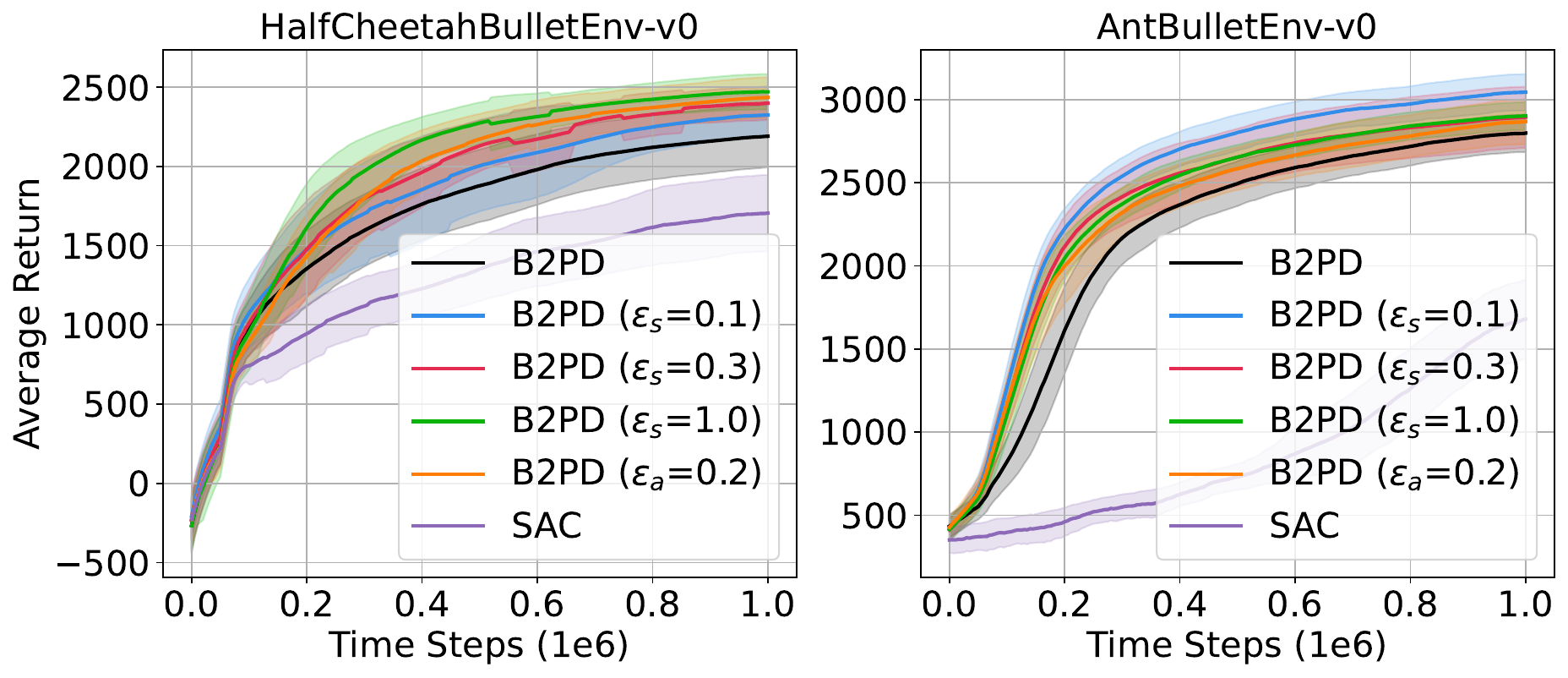}
    \caption{Average return over 10 seeds with different levels of conditional state-noise and action-noise applied to the behavioral prior. The shaded area represents the 95\% confidence interval over the average performance of 10 trials. Curves are uniformly smoothed for visual clarity.}
    \label{ab_noise_rl}
\end{figure}

Fig.~\ref{ab_noise_rl} demonstrates that B2PD consistently benefits from Gaussian state noise injected at three different intensity levels, and action noise further contributes to performance improvements. This improvement arises from two complementary effects: first, the proposed SDA Noise mechanism stabilizes Q-value learning, which helps to extract high-quality policy anchors; second, the injected noise increases the diversity of the inferred action priors, providing the actor with a richer set of high-value actions.
Notably, these noise-conditioned priors differ fundamentally from actions sampled from clean states: the former correspond to neighboring strategies of the current policy, capturing its potential future trends, whereas the latter may include more out-of-distribution actions. As a result, priors inferred from noisy states exhibit stronger robustness, allowing B2PD to achieve higher sample efficiency and faster policy convergence in purely online RL settings. These results further underscore the promise of B2PD for complex, real-world scenarios.

\begin{table*}[!htbp]
\centering
\setlength\tabcolsep{3.5pt}
\caption{Wall-clock training time of each algorithm on the HalfCheetah-v3 task, measured over 2M environment steps.}

\begin{tabular}{lccccccc|cc}
\toprule
Methods        &DDPG&TD3&SAC&ICM&ALH&NNAC&BAC&B2PD (TD3)&B2PD    \\  
 \midrule
Runtime&8.1h& 5.4h &6.3h&7.4h&6.8h&11.5h&18.3h&9.5h&10.6h\\
\bottomrule
\end{tabular}
\label{tab:consumer_time}
\end{table*}

\subsection{Runtime Analysis}
We evaluated the computational cost of the proposed B2PD method in a Linux environment equipped with a 56-core Intel Xeon(R) 6133 CPU and a single NVIDIA RTX 3090 GPU, and compared it against DDPG, TD3, SAC, ICM, ALH, NNAC, and BAC. The runtime statistics are presented in Table~\ref{tab:consumer_time}.

As reported in Table~\ref{tab:consumer_time}, B2PD incurs higher training time than vanilla baselines, with both B2PD and B2PD (TD3) exhibiting noticeable computational overhead.
This additional cost primarily arises from the training and sampling procedures of the CVAE module, which introduces extra forward and backward passes during policy optimization. 
Furthermore, the larger runtime increase of B2PD (TD3) compared to TD3 can be attributed to its aggressive policy update strategy, where the policy is updated at every iteration ($d=1$), resulting in more frequent actor–critic optimization steps.

We also observe that BAC demonstrates substantial runtime overhead, which is likely caused by its complex multi-branch architecture and coupled optimization pipeline.
Overall, while B2PD introduces additional computational cost, this overhead reflects a deliberate trade-off for improved sample efficiency and policy convergence, as validated by the performance gains reported in previous sections.

\section{Discussion}
\label{sec:limitations}
The empirical results show that B2PD differs fundamentally from prior online policy-prior guidance methods by using a generative expert policy prior as an update anchor to guide policy exploitation.
This mechanism accelerates policy convergence and improves overall performance, highlighting the potential of generative priors as a general tool for addressing sample inefficiency in online RL.
 
\textbf{Limitations.}
Despite demonstrating significant performance improvements in experiments, B2PD still exhibits certain notable limitations. Specifically, as discussed in Section~\ref{sec:toy_exper}, B2PD achieves rapid convergence in toy experiments with dense rewards; however, since the action value prior heavily relies on Q-value estimation, further investigation is needed to obtain high-value priors in sparse reward tasks.

\section{Conclusions and Future Work}
\label{sec:conclusion_future_works}
 
This work investigates the potential benefits of incorporating policy priors into the policy learning process. Building on this insight, we theoretically show that employing a sufficiently expressive distribution approximator can provide policy prior guidance to a Gaussian actor network during policy updates. Motivated by this observation, we propose B2PD to address the challenge of inefficient exploration.
Through extensive empirical evaluation, we demonstrate that, under fixed hyperparameter settings, B2PD improves algorithmic performance on both state-based and pixel-based tasks. Qualitative analyses in toy environments further show that B2PD effectively suppresses exploration in low-value regions by distilling high-value policy priors into the agent. 

In future work, we plan to focus on developing effective methods for learning meta-knowledge priors and investigating whether the learned prior policies can be transferred to other tasks. Further exploration of these questions would provide valuable insights into generalizable policy learning.


\bibliography{cas-refs}
\bibliographystyle{IEEEtran}

 \newpage
\clearpage

\numberwithin{equation}{section}
\numberwithin{figure}{section}
\numberwithin{table}{section}

\section{Theoretical Analyses}
\label{thero_ana}
\subsection{Q-Value Smoothness}
\label{thero_ana_b2d}
 \begin{proof}[Noise-Regularized Q-Value Smoothness]
\label{noise_smooth}
We assume that the action-value function $Q(s_t, a_t)$ is Lipschitz continuous concerning the action variable $a_t$ for any fixed state $s_t \in \mathcal{S}$. That is, there exists a constant $K_Q > 0$ such that for any $a_1, a_2 \in \mathcal{A}$,
\begin{equation}
\label{eq:lipschitz_Q}
||Q(s_t, a_1) - Q(s_t, a_2)|| \leq K_Q \|a_1 - a_2\|.
\end{equation}

This condition ensures that small perturbations in the action space induce proportionally bounded variations in the Q-value, thereby promoting local smoothness of the critic function. We derive a precise upper bound on the approximation error
\begin{equation}
\label{taylor_q_expansion}
\begin{aligned}
\E_\epsilon|Q(s_t, a_t &+ \epsilon) -Q(s_t, a_t )| \approx\E_\epsilon|Q(s_t, a_t) + \nabla_{a_t} Q(s_t, a_t)^\top \epsilon   \\\quad &+ \frac{1}{2} \epsilon^\top \nabla^2_a Q(s_t, \hat{a}) \epsilon-Q(s_t, a_t )|\\
&= \E_\epsilon|\nabla_{a_t} Q(s_t, a_t)^\top \epsilon+\frac{1}{2}  \epsilon^\top\nabla^2_a Q(s_t, \hat{a}) \epsilon|\\
&\leq \E_\epsilon|\nabla_{a_t} Q(s_t, a_t)^\top \epsilon|+\frac{1}{2} \E_\epsilon| \epsilon^\top\nabla^2_a Q(s_t, \hat{a}) \epsilon|,
\end{aligned}
\end{equation}
where $\hat{a} \sim \mathcal{U}(a_t, a_t + \epsilon)$ is an action sampled uniformly between $a_t$ and the noisy action $a_t + \epsilon$.
For any state $s_t$ and action $a_t$, the Q-function $Q(s_t, a_t)$ is twice differentiable with respect to $a_t$, and there exists two constants $C_1 > 0, C_2 > 0$ such that
\begin{equation}
\| \nabla_{a_t} Q(s_t,  a_t)\| \leq C_1,\| \nabla_{a_t}^2 Q(s_t, a_t) \| \leq C_2.
\end{equation}

We begin by analyzing the first term $\E_\epsilon|\nabla_{a_t} Q(s_t, a_t)^\top \epsilon|$. Since $\mathbb{E}[|X|] = 2 \int_0^\infty x \cdot \frac{1}{\sqrt{2\pi}\tau} \exp\left(-\frac{x^2}{2\tau^2}\right) \, dx = \tau \sqrt{\frac{2}{\pi}}, X \sim \mathcal{N}(0,\tau^2)$
by applying H\"{o}lder's inequality~\cite{folland1999real}, we obtain that
\begin{equation}
\begin{aligned}
   \E_\epsilon|\nabla_{a_t} Q(s_t, a_t)^\top \epsilon| &\leq \E_\epsilon(||\nabla_{a_t} Q(s_t, a_t) ||_\infty||\epsilon||_1)
   \\&\leq C_1 \E_\epsilon||\epsilon||_1 \\&\leq C_1\sqrt{\frac{2}{\pi}}\tau.
   \end{aligned}
\end{equation}

Next, we analyze the second term $\frac{1}{2} \E_\epsilon| \epsilon^\top\nabla^2_a Q(s_t, \hat{a}) \epsilon|$. According to the variance identity $\mathbb{E}[X^2] = \operatorname{Var}(X) + \left(\mathbb{E}[X]\right)^2$. In practice, we apply a symmetric clipping operation to the noise $\epsilon$, which ensures that $\text{Var}(clip(\epsilon)) \le \tau^2$ and $\mathbb{E}(\epsilon) = 0$. It follows that
\begin{equation}
\begin{aligned}
\E_\epsilon|\epsilon^\top\nabla^2_a Q(s_t, \hat{a}) \epsilon| &\leq ||\nabla^2_a Q(s_t, \hat{a})||\cdot \E_\epsilon||\epsilon^2|| \\&=   C_2(\text{Var}(\epsilon)+0)\le C_2\tau^2.
\end{aligned}
\end{equation}

Consequently, the expected deviation is bounded by $\E_\epsilon\left|Q(s_t, a_t + \epsilon) - Q(s_t, a_t)\right| \le C_1\sqrt{\frac{2}{\pi}}\tau+ \frac{1}{2}C_2\tau^2$.
\end{proof}
We observe that the upper bound of the approximation error increases as the policy variance grows. In the early stage of training, maintaining a low error bound is crucial for stabilizing Q-value estimation. In contrast, during the later stage, a larger noise variance is needed to preserve sufficient exploration and prevent the policy from converging to suboptimal solutions. Based on this observation, we propose an SDA Noise scheduling mechanism to enable adaptive noise modulation, thereby promoting more stable learning of Q-values.

\vfill

\end{document}